\documentclass{article}

\usepackage{microtype}
\usepackage{graphicx}
\usepackage{subcaption}
\usepackage{booktabs} 
\usepackage[table]{xcolor}
\usepackage{hyperref}
\usepackage{tikz}
\tikzset{sepColor/.style={color=gray!70}}

\usepackage[accepted]{icml2026}

\usepackage{amsmath}
\usepackage{amssymb}
\usepackage{mathtools}
\usepackage{amsthm}
\usepackage{pgfplots}
\pgfplotsset{compat=1.18}
\usepackage{pgfplotstable}
\usepackage{multirow}
\pgfplotsset{compat=1.18}
\usepackage[capitalize,noabbrev]{cleveref}
\usepackage{algorithm}
\usepackage{xurl}
\theoremstyle{plain}
\newtheorem{theorem}{Theorem}[section]
\newtheorem*{theorem*}{Theorem}
\newtheorem{proposition}[theorem]{Proposition}

\theoremstyle{definition}

\theoremstyle{remark}

\usepackage[textsize=tiny]{todonotes}

\icmltitlerunning{TokenRatio: Principled Token-Level Preference Optimization via Ratio Matching}

\usepackage{tikz}
\usepackage{array}
\usepackage{booktabs}
\definecolor{WinGreen}{RGB}{120, 190, 120}     
\definecolor{TieYellow}{RGB}{250,200,99}   
\definecolor{LossRed}{RGB}{241,124,124}    
\usepackage{pgfplots}
\pgfplotsset{compat=1.18}
\usepackage{adjustbox}

\newcommand{\WinBar}[3]{%
\begin{tikzpicture}[baseline]
  \def\W{3.6}    
  \def\H{0.5}   
  \def\S{0.036}  

  \fill[WinGreen]   (0,0) rectangle (#1*\S,\H);
  \fill[TieYellow] (#1*\S,0) rectangle ({(#1+#2)*\S},\H);
  \fill[LossRed]   ({(#1+#2)*\S},0) rectangle (\W,\H);

  \ifdim #1pt>10pt
    \node[font=\scriptsize] at ({#1*\S/2},\H/2) {#1\%};
  \fi
  \ifdim #2pt>10pt
    \node[font=\scriptsize] at ({(#1+#2/2)*\S},\H/2) {#2\%};
  \fi
  \ifdim #3pt>10pt
    \node[font=\scriptsize] at ({\W-(#3*\S/2)},\H/2) {#3\%};
  \fi
\end{tikzpicture}%
}

\begin{document}

\twocolumn[
  \icmltitle{TokenRatio: Principled Token-Level Preference Optimization via Ratio Matching}



  \icmlsetsymbol{equal}{*}

  \begin{icmlauthorlist}
    \icmlauthor{Truong Nguyen}{equal,hust}
    \icmlauthor{Tien-Phat Nguyen}{equal,hust}
    \icmlauthor{Linh Ngo Van}{hust}
    \icmlauthor{Duy Minh Ho Nguyen}{unistuttgart,dfki,max}
    \icmlauthor{Khoa D. Doan}{vin}
    \icmlauthor{Trung Le}{monash}
  \end{icmlauthorlist}

  \icmlaffiliation{hust}{Hanoi University of Science and Technology}
   \icmlaffiliation{unistuttgart}{University of Stuttgart}
    \icmlaffiliation{dfki}{German Research Center for Artificial Intelligence}
  \icmlaffiliation{max}{Max Planck Research School for Intelligent Systems}
  \icmlaffiliation{vin}{VinUniversity}
  \icmlaffiliation{monash}{Monash University}

  \icmlcorrespondingauthor{Linh Ngo Van}{linhnv@soict.hust.edu.vn}
  \icmlcorrespondingauthor{Trung Le}{trunglm@monash.edu}

  \icmlkeywords{Machine Learning, ICML}

  \vskip 0.3in
]



\printAffiliationsAndNotice{\icmlEqualContribution}

\begin{abstract}
Direct Preference Optimization (DPO) is a widely used RL-free method for aligning language models from pairwise preferences, but it models preferences over full sequences even though generation is driven by per-token decisions. Existing token-level extensions typically decompose a sequence-level Bradley–Terry objective across timesteps, leaving per-prefix (state-wise) optimality implicit. We study how to recover \textbf{token-level} preference optimality using only standard sequence-level pairwise comparisons.
We introduce \textbf{Token-level Bregman Preference Optimization (TBPO)}, which posits a token-level Bradley--Terry preference model over next-token actions conditioned on the prefix, and derive a Bregman-divergence density-ratio matching objective that generalizes the logistic/DPO loss while preserving the optimal policy induced by the token-level model and maintaining DPO-like simplicity. We introduce two instantiations: TBPO-Q, which explicitly learns a lightweight state baseline, and TBPO-A, which removes the baseline through advantage normalization.
Across instruction following, helpfulness/harmlessness, and summarization benchmarks, TBPO improves alignment quality and training stability and increases output diversity relative to strong sequence-level and token-level baselines. Code is available at \url{https://github.com/dinhtruongng/TBPO}.
\end{abstract}

\section{Introduction}

Aligning large language models with human preferences is a central goal of modern fine-tuning, and recent work has increasingly favored lightweight and stable objectives for this purpose. In particular, Direct Preference Optimization (DPO) \cite{rafailov2023dpo} avoids the engineering complexity and variance of RL-style updates while still producing visibly improved behavior. However, a fundamental mismatch remains: language models \textbf{generate} text by making a sequence of local decisions, one token at a time conditioned on a prefix, while most preference optimization objectives treat each completion as a single indivisible object. Although this sequence-level view is convenient, it strains the connection between \textbf{where} a model is optimized (a whole response) and \textbf{how} it actually behaves at inference time (a distribution over next tokens at every state).

A natural response to the mismatch between autoregressive generation and sequence-level preference objectives is to push preference optimization to token granularity. However, existing “token-level” methods (e.g., TDPO \cite{zeng2024tokenleveldirectpreferenceoptimization} and TISDPO \cite{tis-dpo}) still fundamentally optimize a sequence-level Bradley–Terry preference model, differing mainly in how the sequence-level signal is distributed across timesteps. As a result, token-wise optimality is not explicitly enforced but remains implicit. This limitation becomes increasingly problematic for long generations: preference supervision is only observed at the level of full responses, making credit assignment to individual tokens indirect and allowing small early-token errors to cascade through subsequent decisions, even as the sequence-level objective improves. What we ultimately want, but rarely formalize, is a policy that is \textbf{preference-optimal at every state}, i.e., one that makes locally optimal tradeoffs at each state before the remainder of the completion unfolds. Achieving this stronger notion of optimality without sacrificing the simplicity and efficiency that make DPO attractive is therefore both challenging and valuable.


This paper proposes \textbf{Token-level Bregman Preference Optimization (TBPO)}, a preference-optimization framework that explicitly models preferences at the token level via density ratio matching while relying solely on sequence-level comparison data. TBPO is built on a token-level Bradley--Terry preference model that aligns preference learning with the autoregressive decision process of language models, and is designed to recover a token-wise optimal policy instead of implicitly relying on sequence-level credit assignment distributed across timesteps. Depending on the choice of scoring function in the token-level Bradley--Terry model, TBPO naturally yields different variants. We introduce two instantiations: \textbf{TBPO-Q}, which defines the token-level score via a state--action value function, and \textbf{TBPO-A}, which defines it via an advantage function. By casting TBPO as density-ratio matching under a Bregman-divergence objective, we even obtain a plug-and-play family of pairwise preference losses: small tweaks to the Bregman generator yield different objectives. Crucially, this flexibility does not come at the cost of theory or usability: the resulting objective still supports an optimality characterization in which the minimizer corresponds to the desired optimal token-level policy, while retaining DPO-style simplicity without fitting an explicit reward model or running on-policy RL.


We summarize our contributions as follows:
\begin{itemize}
    \item \textbf{Token-level Bregman Preference Optimization (TBPO).}
    We propose TBPO, a token-level preference-optimization framework that models preferences directly at the token level and learns from sequence-level comparisons.
    \item \textbf{Theory.}
    We provide a rigorous derivation and an optimality characterization: under the token-level preference model, minimizing the TBPO objective recovers a token-level optimal policy.
    \item \textbf{Empirical results.}
    Across benchmarks, model backbones, and tasks, TBPO improves alignment while also enhancing training stability and preserving response diversity relative to strong baselines.
\end{itemize}

\section{Related Work}

\textbf{Preference-based alignment.}
Aligning large language models (LLMs) with human judgments is commonly approached via preference-based optimization.
RLHF learns an explicit reward model from pairwise comparisons and optimizes the policy via reinforcement learning, but suffers from instability and high computational cost \citep{christiano2017deep,ouyang2022training}.
DPO provides an RL-free alternative by deriving a closed-form relationship between rewards and optimal policies under a KL-regularized objective, yielding a supervised loss equivalent to fitting a Bradley--Terry model over policy likelihood ratios \citep{rafailov2023dpo}.
Owing to its simplicity and stability, DPO has become a standard alignment baseline.

\textbf{DPO variants.}
Numerous extensions modify DPO through alternative feedback forms, regularization schemes, or reference models \citep{azar2023ipo,ethayarajh2024kto,xu2023cp,hong2024orpo,meng2024simpo, beta-dpo}.
Recent work also targets more specific failure modes: CDPO applies causal adjustment to reduce confounding in preference data, while logit-space SAM stabilizes DPO by applying sharpness-aware perturbations directly to policy logits~\citep{le2026causal,luo2026logitssam}.
While improving optimization dynamics or robustness, these methods largely preserve the \emph{sequence-level} preference modeling assumption.
Recent surveys summarize this rapidly expanding design space \citep{xiao2024survey,liu2024survey}.

\textbf{Token-level methods.}
To better regulate optimization dynamics in autoregressive models, several token-level variants of DPO have been proposed. TDPO decomposes the sequence-level DPO objective into token-wise terms with per-token KL constraints, while TIS-DPO further incorporates token-level importance sampling to improve training stability~\citep{zeng2024tokenleveldirectpreferenceoptimization,tis-dpo}. However, both methods still assume a \emph{sequence-level} Bradley–Terry preference model, redistributing sequence-level supervision across timesteps rather than explicitly defining token-level preferences. As a result, per-state optimality remains implicit rather than being directly enforced.
Beyond direct preference optimization, SWIFT assigns teacher-derived token importance weights during self-play fine-tuning~\citep{le2025token}, while CTPD transfers preference-aligned behavior across heterogeneous tokenizers by adapting token-level importance sampling~\citep{truong2026ctpd}. These methods use token-granular signals for weighting or teacher--student transfer, rather than deriving a per-prefix preference-optimal policy from pairwise comparisons.
More broadly, granularity-sensitive alignment has also been explored in teacher--student transfer: recent LLM distillation work aligns distributions, layer trajectories, or tokenizer-agnostic spans~\citep{hoang2026mcw,chi2026mta,dao2026sra}, while embedding distillation aligns relations and layers in compact representation spaces~\citep{truong2025emo,an2026mol,huy2026mipic}. Related topic-modeling work similarly shows that the choice of semantic representation space, including encoder and LLM-derived representations, affects topic quality and consistency~\citep{nguyen-etal-2025-xtra,DBLP:conf/aaai/PhatMVDN26,xuan2026llmxtmenhancingcrosslingualtopic}. We view this literature as complementary evidence that the unit of representation and alignment matters; TBPO studies this issue in preference learning by moving from sequence-level ratios to prefix- and token-level ratios.

\textbf{Density ratio matching.}
Another line of work interprets preference optimization through \emph{density ratio matching}.
Bregman Preference Optimization (BPO) formalizes this view by framing preference learning as likelihood-ratio estimation under Bregman divergences, connecting DPO to classical density-ratio estimators (e.g., logistic regression~\cite{JMLR:v13:gutmann12a}, KLIEP~\cite{NIPS2007_be83ab3e}, LSIF~\cite{lsif}) and showing that DPO is a special case of this framework \citep{kim2025preferenceoptimizationestimatingratio}.
BPO further introduces a scaled Basu's power divergence to interpolate between KLIEP and LSIF while modulating gradient reweighting during optimization.
In the safety-constrained setting, BSO derives safety alignment as density-ratio matching and introduces a Bregman family of single-stage safety objectives~\citep{nguyen2026bsosafetyalignmentdensity}.
Our focus is complementary: unlike these ratio-matching formulations, TBPO explicitly models token-level preferences and targets per-prefix optimality.

\textbf{Our contribution.}
Motivated by these gaps, we study \textbf{token-level preference modeling}. 
By defining preferences via a token-level Bradley--Terry model and adopting a ratio-matching perspective, we learn preference-optimal decisions at each prefix using only sequence-level comparison data.

\begin{figure*}[!hbt]
    \centering
    \includegraphics[width=1.0\linewidth]{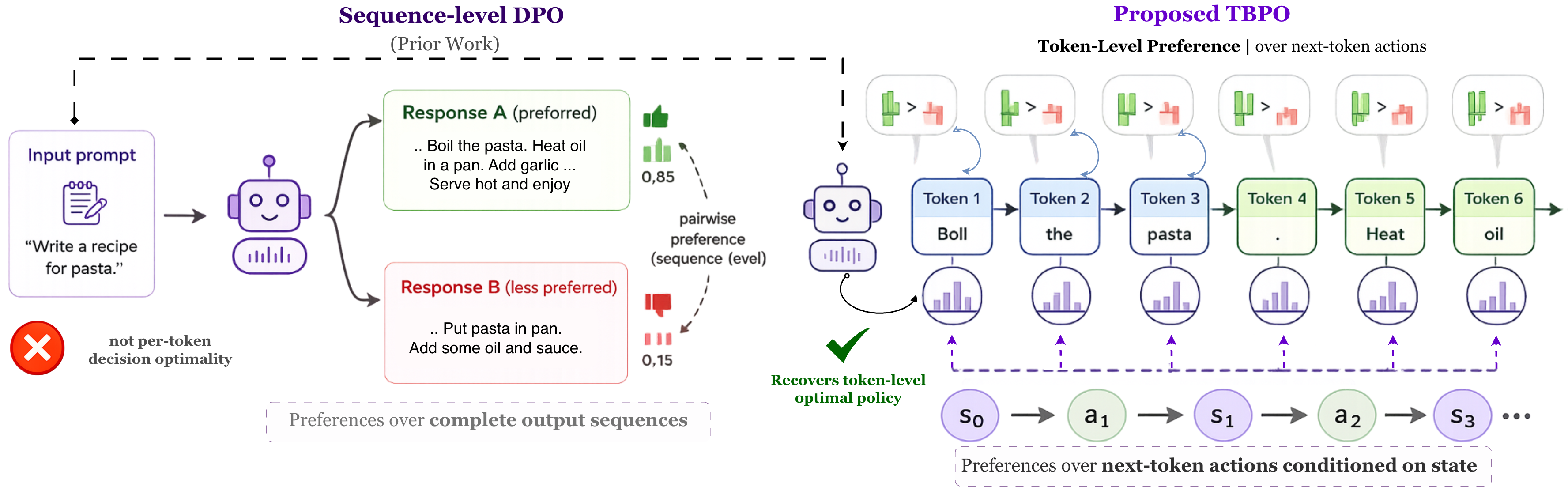}
    \caption{Overview comparison between our proposed TBPO and Sequence-level DPO.} 
    \label{fig:overview}
\end{figure*}

\section{Background}
\subsection{Preliminary and Notions}
When viewing text generation as a Markov decision process \citep{Puterman1994MDP}, we define the state at step $t$ as the prompt together with the response prefix produced so far, i.e., $s_t = [x, y_{<t}]$. The action is the next token to generate, $a_t = y_t$, and the per-token reward is given by $R_t := R(s_t, a_t) = R([x, y_{<t}], y_t)$.
Using these definitions, for a policy $\pi$ we define the state-action value function $Q^\pi$, the state value function $V^\pi$ and the advantage function $A^\pi$ as:

{\small
\begin{align*}
Q^\pi([x, y_{<t}], y_t) &= \mathbb{E}_\pi\left[\sum_{k=0}^\infty \gamma^k R_{t+k} \mid s_t = [x, y_{<t}], a_t = y_t\right], \nonumber \\
V^\pi([x, y_{<t}]) &= \mathbb{E}_\pi\left[Q^\pi([x, y_{<t}], y_t) \mid s_t = [x, y_{<t}]\right], \nonumber \\
A^\pi([x, y_{<t}], y_t) &= Q^\pi([x, y_{<t}], y_t) - V^\pi([x, y_{<t}]).
\end{align*}
}
where $\gamma$ denotes the discount factor. We set $\gamma = 1$ throughout this work.

Following TDPO \cite{zeng2024tokenleveldirectpreferenceoptimization}, we cast next-token generation as a sequential decision problem and optimize a KL-regularized objective at the token level:
\begin{equation}
\label{objective}
\begin{aligned}
\max_{\pi_{\theta}} \mathbb{E}_{x, y_{<t}\sim\mathcal{D},\, z\sim\pi_{\theta}(\cdot\mid[x,y_{<t}])}
\Bigl[ A_{\pi_{\text{ref}}}([x,y_{<t}],z) \\
\qquad - \beta\,D_{\mathrm{KL}}\!\Bigl(\pi_{\theta}(\cdot\mid[x,y_{<t}])\Vert\pi_{\text{ref}}(\cdot\mid[x,y_{<t}])\Bigr) \Bigr].
\end{aligned}
\end{equation}
Intuitively, each prefix state $[x,y_{<t}]$ defines a local decision over the next token. The advantage term encourages increasing the probability of tokens that are estimated (under the reference) to be locally better, while the KL penalty keeps the update conservative by anchoring $\pi_\theta$ to $\pi_{\mathrm{ref}}$. This is aligned with our goal of \textbf{token-level preference optimality}: the optimization objective explicitly targets preference-improving decisions at the state--action level, rather than only redistributing a sequence-level preference signal across timesteps. We later prove (Theorem \ref{optimality-theorem}) that, under sufficient model capacity, our token-level density-ratio matching objective recovers the optimal policy for Eq. \ref{objective}.

Using the objective in Eq.~\eqref{objective}, TDPO \citep{zeng2024tokenleveldirectpreferenceoptimization}
derives closed-form expressions for the optimal state-action $Q$ and advantage functions under the reference policy. 
\begin{equation}
\begin{split}
& Q_{\pi_{\mathrm{ref}}}([x,y_{<t}], z) = \\
& \qquad \beta \log \frac{\pi_{\theta}^*(z \mid [x,y_{<t}])}{\pi_{\mathrm{ref}}(z \mid [x,y_{<t}])} 
+ \beta \log Z([x,y_{<t}]; \beta),
\end{split}
\label{eq:tokenQformula}
\end{equation}
\begin{equation}
\small
\begin{split}
A_{\pi_{\mathrm{ref}}}([x,y_{<t}], z)
&= \beta \log \frac{\pi_{\theta}^*(z \mid [x,y_{<t}])}{\pi_{\mathrm{ref}}(z \mid [x,y_{<t}])} \\
&\hspace{-2em}
+ \beta D_{\mathrm{KL}}\!\Bigl(
\pi_{\mathrm{ref}}(\cdot\mid [x,y_{<t}])\,\big\|\,\pi_{\theta}^*(\cdot\mid [x,y_{<t}])
\Bigr).
\end{split}
\label{eq:tokenAformula}
\end{equation}
\noindent
where $Z([x,y_{<t}];\beta)
= \mathbb{E}_{z\sim\pi_{\mathrm{ref}}(\cdot\mid[x,y_{<t}])}
\, e^{\tfrac{1}{\beta}Q_{\pi_{\mathrm{ref}}}([x,y_{<t}],z)}$
is the partition function.

\subsection{Density Ratio Matching}
\label{sec:density_matching}
Consider two probability distributions $p_{\text{de}}(x)$ and $p_{\text{nu}}(x)$. The objective of likelihood ratio estimation is to construct a parametric model $R_\theta(x)$ that closely approximates the true ratio $R_{\text{data}}(x) := \frac{p_{\text{nu}}(x)}{p_{\text{de}}(x)}$ using independent and identically distributed samples drawn from each distribution. The predominant technique for this task is probabilistic classification through logistic regression~\cite{JMLR:v13:gutmann12a}. Beyond this, classical approaches including the Kullback-Leibler importance estimation procedure (KLIEP)~\cite{NIPS2007_be83ab3e} and least-squares importance fitting (LSIF)~\cite{lsif} have gained substantial adoption in practice.

A key insight from~\cite{Sugiyama2012DensityRatioMatching} is that these diverse ratio estimation techniques can be understood within a unified framework based on Bregman divergence~\cite{Bregman1967TheRM}. This unification leads to the following general objective:
\begin{equation*}
\begin{split}
D_h\!\left(R_{\text{data}}(x)\|R_\theta(x)\right)
&= \int p_{\text{de}}(x)\,
B_h\!\left(R_{\text{data}}(x)\|R_\theta(x)\right)\,dx .
\end{split}
\end{equation*}

which can be expanded as:
\begin{multline}
D_h\!\left(R_{\text{data}}(x)\|R_\theta(x)\right)
= \int p_{de}(x)\Bigl(
    h\!\left(R_{\text{data}}(x)\right)\\
  - h\!\left(R_{\theta}(x)\right) 
  - h'\!\left(R_{\theta}(x)\right)\bigl(R_{\text{data}}(x)-R_{\theta}(x)\bigr)
\Bigr)\,\mathrm{d}x .
\end{multline}
where $h$ represents a strictly convex and twice continuously differentiable function whose derivative is denoted by $h'$. The term $B_h$ corresponds to the pointwise Bregman divergence, which quantifies the approximation error of the first-order Taylor expansion.

\section{Methodology}
\label{sec:method}

A central modeling choice in preference optimization is the granularity at which preferences are defined. While standard DPO-style derivations posit a \textbf{sequence-level} Bradley–Terry model, generation and learning in LMs naturally operate at the \textbf{token level}. Our objective is therefore to formulate a token-level preference model, which enables density ratio matching at the token level and culminates in the framework \textbf{Token-level Bregman Preference Optimization (TBPO)}. We present in Figure~\ref{fig:overview} an overview of the proposed method.

\subsection{Token-level Bradley--Terry model}
\label{sec:token_bt}
For a preference triple $(x,y^w,y^l)$ and token step $t$, the two responses induce different prefixes
\begin{equation}
s_t^w = [x,y^w_{<t}],\qquad s_t^l = [x,y^l_{<t}],
\end{equation}
and the compared actions are $a_t^w=y_t^w$ and $a_t^l=y_t^l$.
We posit a token-level Bradley-Terry likelihood
\begin{equation}
 p_{\text{data}}(a_t^w \succ a_t^l \mid s_t^w,s_t^l)
=
\sigma\!\Big(S(s_t^w,a_t^w)-S(s_t^l,a_t^l)\Big),
\label{eq:bt_token}
\end{equation}
where $S(s,a)$ is a token-level score and the sigmoid function $\sigma(u)=1/(1+\exp(-u))$. Note that we call this preference probability as $p_{\text{data}}$ because  this is computed based on the preference data $y^w \succ y^l$ and we choose an appropriate score function to recover preference samples from data. Note that in DPO, $p_{\text{data}}\left(y^{w}\succ p^{l}\right)\coloneqq\sigma\left(r^{*}\left(x,y^{w}\right)-r^{*}\left(x,y^{l}\right)\right)$. 

The key idea is to lift a sequence-level preference label $y^w \succ y^l$ to supervision at each generation step. Conditioned on the prompt $x$ and the partial responses generated so far, $y^w_{<t}$ and $y^l_{<t}$, we prefer generating the next token from the winning trajectory, $y^w_t$, over the next token from the losing trajectory, $y^l_t$. This provides a preference signal throughout generation and encourages the extended prefixes $y^w_{\le t}$ and $y^l_{\le t}$ to separate in quality, as measured by a token-level score $S$.

Different choices of the score function $S$ yield different instantiations of the token-level Bradley--Terry model. We consider two natural options: the state-action value function $Q$ and the advantage function $A$, giving two variants: (1) TBPO-Q and (2) TBPO-A.

\subsection{TBPO-Q}
\label{sec:tbpo_q}

Here we instantiate the token-level Bradley--Terry model by using the reference-policy
state--action value as the score, i.e., $S(s,a)=Q_{\pi_{\mathrm{ref}}}(s,a)$:
\begin{equation} \label{eq:BT_Q}
\begin{split}
    &p_{\text{data}}(y^w_t \succ y_t^l \mid x, y^w_{<t}, y^l_{<t}) \\
    &= \sigma\bigl(Q_{\pi_{\mathrm{ref}}}([x, y^w_{<t}], y^w_t) - Q_{\pi_{\mathrm{ref}}}([x, y^l_{<t}], y^l_t)\bigr)
\end{split}
\end{equation}
\noindent\textbf{Intuition.}
Recall that $Q_{\pi_{\mathrm{ref}}}([x,y_{<t}],y_t)$ is the expected cumulative
reward obtained by emitting token $y_t$ at prefix $[x,y_{<t}]$ and then rolling out
$\pi_{\mathrm{ref}}$ for the remainder of the sequence.
Eq.~(\ref{eq:BT_Q}) therefore treats each preference label as a noisy comparison of two
one-step lookahead utilities: taking $y_t^w$ after the winning prefix versus taking $y_t^l$
after the losing prefix.
Equivalently, the Bradley--Terry log-odds are proportional to the difference in these expected
returns, so maximizing this likelihood pushes probability mass toward tokens whose downstream
continuations are expected to yield higher reward.

We can establish the following equality that links the optimal policy to the reference policy,
thereby later encouraging token-level density--ratio matching.
\begin{proposition}
\label{prop:q_token_ratio}
    Let $\pi_{\theta^*}$ denote the optimal policy for the optimization problem in Eq. (\ref{objective}). We have the following equality:
    \begin{equation}
    \begin{split}\frac{\pi_{\theta^*}\!\left(y_{t}^{w}\mid[x,y_{<t}^{w}]\right)}{\pi_{\theta^*}\!\left(y_{t}^{l}\mid[x,y_{<t}^{l}]\right)} & =\frac{\pi_{\mathrm{ref}}\!\left(y_{t}^{w}\mid[x,y_{<t}^{w}]\right)}{\pi_{\mathrm{ref}}\!\left(y_{t}^{l}\mid[x,y_{<t}^{l}]\right)}\\
\times & \left(\frac{p_{\text{data}}(y_{t}^{w}\succ y_{t}^{l}\mid x,y_{<t}^{w},y_{<t}^{l})}{p_{\text{data}}(y_{t}^{w}\prec y_{t}^{l}\mid x,y_{<t}^{w},y_{<t}^{l})}\right)^{\frac{1}{\beta}}w_{t},
\end{split}
    \end{equation}
    with per state weight $w_t = \exp\left(\log \frac{Z([x, y^l_{<t}]; \beta)}{Z([x, y^w_{<t}]; \beta)}\right)$.
\end{proposition}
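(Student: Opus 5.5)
The argument is a direct algebraic combination of the closed-form expression for $Q_{\pi_{\mathrm{ref}}}$ in Eq.~\eqref{eq:tokenQformula} with the token-level Bradley--Terry model in Eq.~\eqref{eq:BT_Q}. The starting observation is that for the sigmoid link the odds satisfy $\sigma(u)/(1-\sigma(u)) = e^{u}$. Since $p_{\text{data}}(y_t^w\prec y_t^l\mid\cdot) = 1-p_{\text{data}}(y_t^w\succ y_t^l\mid\cdot)$, Eq.~\eqref{eq:BT_Q} yields
\begin{equation*}
\log\frac{p_{\text{data}}(y_{t}^{w}\succ y_{t}^{l}\mid x,y_{<t}^{w},y_{<t}^{l})}{p_{\text{data}}(y_{t}^{w}\prec y_{t}^{l}\mid x,y_{<t}^{w},y_{<t}^{l})}
= Q_{\pi_{\mathrm{ref}}}([x,y^w_{<t}],y^w_t)-Q_{\pi_{\mathrm{ref}}}([x,y^l_{<t}],y^l_t).
\end{equation*}

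Next, I apply Eq.~\eqref{eq:tokenQformula} separately at the state $s_t^w=[x,y^w_{<t}]$ with action $y_t^w$, and at $s_t^l=[x,y^l_{<t}]$ with action $y_t^l$. Here $\pi_{\theta}^*$ is identified with the optimizer $\pi_{\theta^*}$ of Eq.~\eqref{objective}. That identification is exactly how Eq.~\eqref{eq:tokenQformula} arises: the per-state problem in Eq.~\eqref{objective} is a KL-regularized linear objective, whose maximizer is the Gibbs policy $\pi_{\theta^*}(z\mid s)=\pi_{\mathrm{ref}}(z\mid s)\exp(Q_{\pi_{\mathrm{ref}}}(s,z)/\beta)/Z(s;\beta)$. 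Advantage and $Q$ differ only by the state-dependent $V_{\pi_{\mathrm{ref}}}(s)$, which cancels in the normalization, so the same Gibbs form holds with $Q$ in place of $A$. Subtracting the two instances gives
\begin{equation*}
Q^w - Q^l = \beta\log\frac{\pi_{\theta^*}(y^w_t\mid s^w_t)}{\pi_{\mathrm{ref}}(y^w_t\mid s^w_t)} - \beta\log\frac{\pi_{\theta^*}(y^l_t\mid s^l_t)}{\pi_{\mathrm{ref}}(y^l_t\mid s^l_t)} + \beta\log\frac{Z(s^w_t;\beta)}{Z(s^l_t;\beta)}.
\end{equation*}

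Equating this with the log-odds from the first step and dividing by $\beta$, I solve for $\log\bigl[\pi_{\theta^*}(y^w_t\mid s^w_t)/\pi_{\theta^*}(y^l_t\mid s^l_t)\bigr]$. The result is the log reference ratio, plus $\tfrac1\beta$ times the log-odds, plus $\log\bigl(Z(s^l_t;\beta)/Z(s^w_t;\beta)\bigr)$. Exponentiating gives the claimed identity with $w_t=\exp\bigl(\log\tfrac{Z([x,y^l_{<t}];\beta)}{Z([x,y^w_{<t}];\beta)}\bigr)$.

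The main point needing care is the second step. One must check that the partition function in Eq.~\eqref{eq:tokenQformula}, defined via $Q$, is consistent with the optimizer of Eq.~\eqref{objective}, which is stated with $A$. One must also confirm that the $Z$ terms genuinely do not cancel, because the two prefixes differ and so the normalizers are evaluated at different states; this is why the residual weight $w_t$ survives in the statement. Everything else is routine log-odds algebra.
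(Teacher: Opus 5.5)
Your proposal is correct and follows essentially the same route as the paper's proof. Both convert the sigmoid in Eq.~\eqref{eq:BT_Q} to odds, substitute the closed form of $Q_{\pi_{\mathrm{ref}}}$ from Eq.~\eqref{eq:tokenQformula} at the two different prefixes, and rearrange so that the non-cancelling partition functions yield $w_t$. The differences are cosmetic: the paper works with the inverse odds and raises to the power $1/\beta$ instead of working in log space, and it takes Eq.~\eqref{eq:tokenQformula} from TDPO as given, whereas you additionally justify it via the Gibbs form of the maximizer of Eq.~\eqref{objective}.
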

See proof in Appendix~\ref{subsec:proof_q_token_ratio}.

\subsection{TBPO-A}
\label{sec:tbpo_a}

In this variants, we define a token-level BT preference model in terms of the
reference-policy advantage.
\begin{equation}
\label{eq:BT_A}
\begin{split}
    &p_{\text{data}}\!\left(y_t^w \succ y_t^l \mid x, y_{<t}^w, y_{<t}^l\right) \\
    &\quad = \sigma\!\Bigl(
    A_{\pi_{\mathrm{ref}}}\bigl([x, y^w_{<t}], y^w_t\bigr)
    - A_{\pi_{\mathrm{ref}}}\bigl([x, y^l_{<t}], y^l_t\bigr)
    \Bigr)
\end{split}
\end{equation}
This likelihood assigns higher probability to the token with larger advantage.
Optimizing Eq.~(\ref{objective}) therefore encourages the learned policy to
increase the advantage gap between the winning and losing extensions
$y^w_{\le t}$ and $y^l_{\le t}$, ensure generation quality at every state.

Similarly, we can establish the following equality that links the optimal policy to the reference policy, thereby encouraging token-level density--ratio matching. 
\begin{proposition}
\label{pro:a_token_ratio}
    Let $\pi_{\theta^*}$ denote the optimal policy for the optimization problem in Eq. (\ref{objective}). We have the following equality:
    \begin{equation}
    \begin{split}\frac{\pi_{\theta^*}\!\left(y_{t}^{w}\mid[x,y_{<t}^{w}]\right)}{\pi_{\theta^*}\!\left(y_{t}^{l}\mid[x,y_{<t}^{l}]\right)} & =\frac{\pi_{\mathrm{ref}}\!\left(y_{t}^{w}\mid[x,y_{<t}^{w}]\right)}{\pi_{\mathrm{ref}}\!\left(y_{t}^{l}\mid[x,y_{<t}^{l}]\right)}\\
\times & \left(\frac{p_{\text{data}}(y_{t}^{w}\succ y_{t}^{l}\mid x,y_{<t}^{w},y_{<t}^{l})}{p_{\text{data}}(y_{t}^{w}\prec y_{t}^{l}\mid x,y_{<t}^{w},y_{<t}^{l})}\right)^{\frac{1}{\beta}}w_{t},
\end{split}    
\end{equation}
    where we have defined per state weight
    \begin{equation*}
    \small
    \begin{split}
        w_t = \exp\Bigl( &D_{\mathrm{KL}}\bigl(\pi_{\mathrm{ref}}(\cdot\mid [x,y^l_{<t}]) \,\big\|\, \pi_{\theta}^{*}(\cdot\mid [x,y^l_{<t}])\bigr) \\
        &- D_{\mathrm{KL}}\bigl(\pi_{\mathrm{ref}}(\cdot\mid [x,y^w_{<t}]) \,\big\|\, \pi_{\theta}^{*}(\cdot\mid [x,y^w_{<t}])\bigr) \Bigr).
    \end{split}
    \end{equation*}
\end{proposition}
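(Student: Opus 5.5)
The proof follows the same template as Proposition~\ref{prop:q_token_ratio}, with the closed-form advantage \eqref{eq:tokenAformula} replacing the closed-form $Q$-function \eqref{eq:tokenQformula}.

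\emph{Step 1: invert the Bradley--Terry model.} Under the token-level model \eqref{eq:BT_A} we have $1-\sigma(u)=\sigma(-u)$ and $\sigma(u)/\sigma(-u)=e^{u}$. Hence
\[
\log\frac{p_{\text{data}}(y_t^w\succ y_t^l\mid x,y^w_{<t},y^l_{<t})}{p_{\text{data}}(y_t^w\prec y_t^l\mid x,y^w_{<t},y^l_{<t})}
= A_{\pi_{\mathrm{ref}}}([x,y^w_{<t}],y^w_t)-A_{\pi_{\mathrm{ref}}}([x,y^l_{<t}],y^l_t).
\]

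\emph{Step 2: substitute the closed form of the advantage.} Insert \eqref{eq:tokenAformula} for each of the two advantages, evaluated at $s^w_t=[x,y^w_{<t}]$ and at $s^l_t=[x,y^l_{<t}]$. Since $\pi_{\theta^*}$ is the optimal policy of Eq.~\eqref{objective}, it is the $\pi^*_\theta$ appearing in that formula. The difference of advantages then becomes
\[
\beta\log\frac{\pi_{\theta^*}(y^w_t\mid s^w_t)}{\pi_{\mathrm{ref}}(y^w_t\mid s^w_t)}
-\beta\log\frac{\pi_{\theta^*}(y^l_t\mid s^l_t)}{\pi_{\mathrm{ref}}(y^l_t\mid s^l_t)}
+\beta\Bigl(D_{\mathrm{KL}}(\pi_{\mathrm{ref}}(\cdot\mid s^w_t)\,\|\,\pi^*_\theta(\cdot\mid s^w_t))-D_{\mathrm{KL}}(\pi_{\mathrm{ref}}(\cdot\mid s^l_t)\,\|\,\pi^*_\theta(\cdot\mid s^l_t))\Bigr).
\]

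\emph{Step 3: solve for the ratio of optimal policies.} Divide by $\beta$, move the KL difference to the other side, and exponentiate. This gives
\[
\frac{\pi_{\theta^*}(y^w_t\mid s^w_t)}{\pi_{\theta^*}(y^l_t\mid s^l_t)}
=\frac{\pi_{\mathrm{ref}}(y^w_t\mid s^w_t)}{\pi_{\mathrm{ref}}(y^l_t\mid s^l_t)}
\left(\frac{p_{\text{data}}(y_t^w\succ y_t^l\mid x,y^w_{<t},y^l_{<t})}{p_{\text{data}}(y_t^w\prec y_t^l\mid x,y^w_{<t},y^l_{<t})}\right)^{1/\beta}
\exp\bigl(D_l-D_w\bigr),
\]
where $D_w$ and $D_l$ denote the KL terms at $s^w_t$ and $s^l_t$. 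The last factor is exactly the stated weight $w_t$, with the losing-state KL entering positively and the winning-state KL negatively.

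\emph{Main obstacle.} The only delicate point is the sign and orientation of the KL terms. The KL appears with a plus sign in $A$, so moving it across flips it. It is therefore worth checking that \eqref{eq:tokenAformula} indeed follows from $A=Q-V$ with $V=\beta\log Z$ under the reference rollout, so that the direction $\pi_{\mathrm{ref}}\|\pi^*$ is correct.

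A second point concerns the factor $1/\beta$ in the exponent. The weight as stated has no $1/\beta$, so the exponentiation in Step 3 must be set up so that this matches. Two routes would do it: keep the KL terms unscaled after dividing by $\beta$, or check whether TDPO's formula absorbs the $\beta$. I would reconcile this by following the paper's convention in the proof of Proposition~\ref{prop:q_token_ratio}, where $w_t$ likewise carries no $\beta$ even though $\beta\log Z$ appears in $Q$.

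Beyond these two checks, the argument is purely algebraic.
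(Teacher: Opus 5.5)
Your Steps 1--3 are exactly the paper's argument: invert the Bradley--Terry odds, substitute \eqref{eq:tokenAformula} at both prefixes, divide by $\beta$ and exponentiate. This correctly yields $w_t=\exp(D_l-D_w)$.

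Your two ``obstacles'' are not real obstacles. Dividing by $\beta$ simply cancels the $\beta$ multiplying the KL terms, so no convention is needed. To check \eqref{eq:tokenAformula}, use $V_{\pi_{\mathrm{ref}}}(s)=\mathbb{E}_{z\sim\pi_{\mathrm{ref}}(\cdot\mid s)}Q_{\pi_{\mathrm{ref}}}(s,z)=\beta\log Z(s;\beta)-\beta D_{\mathrm{KL}}(\pi_{\mathrm{ref}}(\cdot\mid s)\,\|\,\pi_\theta^*(\cdot\mid s))$. Taking $V=\beta\log Z$ instead, as you propose, would give an advantage with no KL term at all.
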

See proof in Appendix~\ref{subsec:proof_a_token_ratio}.

\subsection{Interpretation and estimation of per state weights}
In both Proposition~\ref{prop:q_token_ratio} (TBPO-Q) and
Proposition~\ref{pro:a_token_ratio} (TBPO-A), an extra multiplicative term $w_t$
appears in the policy ratio.
This term is \emph{state-dependent but action-independent}: it does not depend on
the compared next tokens $y_t^w,y_t^l$ directly, but only on the prefixes $[x, y^w_{<t}]$ and $[x, y^l_{<t}]$.
Intuitively, $w_t$ corrects for the fact that we compare actions taken in
\emph{two different states} (different prefixes), so state-only baseline terms
that would cancel in a same-state comparison no longer cancel.

\textbf{TBPO-Q: ratio of partition functions.}
The TBPO-Q weight is
\[
w_t^{(Q)}=\exp\left(\log \frac{Z([x, y^l_{<t}]; \beta)}{Z([x, y^w_{<t}]; \beta)}\right).
\]
therefore measures the \emph{relative continuation value} of the losing prefix
versus the winning prefix.
If the winning prefix is already a higher-value state (larger $Z([x,y^w_{<t}];\beta)$, then
$w_t^{(Q)}<1$, which downweights the token ratio update so that we do not
over-attribute preference to the single next-token decision when the prefixes
already differ substantially in quality.

\textbf{TBPO-A: difference of KL baselines.}
The TBPO-A weight is
\begin{equation*}
\begin{split}
    w_t^{(A)} = \exp\Bigl( &D_{\mathrm{KL}}\bigl(\pi_{\mathrm{ref}}(\cdot\mid [x,y^l_{<t}]) \,\big\|\, \pi_{\theta}^{*}(\cdot\mid [x,y^l_{<t}])\bigr) \\
    &- D_{\mathrm{KL}}\bigl(\pi_{\mathrm{ref}}(\cdot\mid [x,y^w_{<t}]) \,\big\|\, \pi_{\theta}^{*}(\cdot\mid [x,y^w_{<t}])\bigr) \Bigr),
\end{split}
\end{equation*}
which can be interpreted as a \emph{relative policy-improvement cost} between the two
prefixes: it compares how far the optimal policy must deviate from the reference
at $[x, y^l_{<t}]$ versus $[x, y^w_{<t}]$ under the KL regularizer.

\subsubsection{Practical estimation.}
In practice $Z([x, y_{<t}]; \beta)$ and $D_{\mathrm{KL}}$ are either computationally intractable or expensive. We use plug-in estimators that preserve
the key property that $w_t$ is state-only.

\textbf{TBPO-Q (learned baseline).}
We parameterize a scalar baseline head $b_\phi([x,y_{<t}])$ to predict
$\log Z([x,y_{<t}];\beta)$ and set
\[
\begin{aligned}
\widehat{\log w_t^{(Q)}} 
&= b_\phi\big([x, y^l_{<t}]\big)
 - b_\phi\big([x, y^w_{<t}]\big), \\
\hat w_t^{(Q)} 
&= \exp\!\left(\widehat{\log w_t^{(Q)}}\right).
\end{aligned}
\]
In practice, we attach a lightweight MLP head on top of the LM's final hidden layer to predict $Z([x, y_{<t}]; \beta)$. We train this head during optimization while stopping gradients into the LM backbone, so the LLM does not allocate capacity to baseline estimation at the expense of preference optimization; see Appendix~\ref{imp_details}.

\textbf{TBPO-A (K3 KL estimator).}
We approximate $\pi^*$ by the current policy $\pi_\theta$ and estimate
$D_{\mathrm{KL}}(\pi_{\mathrm{ref}}(\cdot\mid s)\,\|\,\pi_\theta(\cdot\mid s))$
using the low-variance unbiased estimator~\citep{Schulman2020ApproximatingKL}. See more in the Appendix \ref{imp_details}.

\paragraph{TBPO-Q vs.\ TBPO-A.}
Both variants require a state-only correction because token comparisons are made across two
different prefixes.
TBPO-Q’s $w_t$ is a \emph{partition-function / soft-value} ratio and is naturally handled
by a learned baseline head.
TBPO-A removes the partition function analytically but introduces a \emph{KL-baseline} term,
which can be estimated directly from $\pi_{\mathrm{ref}}$ and $\pi_\theta$ using an unbiased estimator.

\newcommand{\mstd}[2]{#1$_{\pm #2}$}
\newcommand{\best}[1]{\textbf{#1}}
\newcommand{\second}[1]{\underline{#1}}

\begin{table*}[t]
\centering
\small
\setlength{\tabcolsep}{7pt}
\renewcommand{\arraystretch}{1.15}
\resizebox{\textwidth}{!}{%
\begin{tabular}{lccccccc}
\toprule
\textbf{Method} & \textbf{HellaSwag} & \textbf{ARC} & \textbf{MMLU} & \textbf{TruthfulQA} & \textbf{Winogrande} & \textbf{GSM8K} & \textbf{Average} \\
\midrule

\rowcolor{gray!15}
\multicolumn{8}{c}{\textbf{Mistral 7B v0.1}} \\
\midrule
SFT
& \mstd{80.72}{0.39} & \mstd{55.54}{1.45} & \mstd{58.41}{0.39} & \mstd{43.67}{1.46} & \mstd{76.71}{1.18} & \mstd{18.49}{1.07} & 55.59 \\
DPO
& \mstd{82.39}{0.38} & \mstd{59.40}{1.43} & \mstd{59.15}{0.38} & \mstd{42.30}{1.49} & \mstd{77.74}{1.16} & {\mstd{34.87}{1.21}} & 59.30 \\
TDPO
& \mstd{82.70}{0.38} & \best{\mstd{61.54}{1.41}} & \mstd{57.17}{0.40} & \mstd{43.48}{1.53} & \mstd{76.95}{1.18} & \mstd{28.65}{1.24} & 58.41 \\
TIS-DPO
& \best{\mstd{83.10}{0.37}} & \mstd{59.09}{1.42} & \mstd{57.81}{0.39} & \best{\mstd{45.91}{1.56}} & \mstd{77.66}{1.17} & \mstd{34.42}{1.31} & 59.66 \\
BPO
& \mstd{81.03}{0.39} & \mstd{57.67}{1.44} & \mstd{59.16}{0.40} & \mstd{41.71}{1.46} & \mstd{77.26}{1.16} & \mstd{32.90}{1.29} & 58.29 \\
\addlinespace[3pt]
TBPO-Q (Ours)
& \mstd{82.72}{0.37} & \second{\mstd{59.74}{1.41}} & \best{\mstd{60.82}{0.39}} & \second{\mstd{44.47}{1.45}} & {\best{\mstd{78.58}{1.11}}} & \best{\mstd{39.34}{1.34}} & \best{60.95} \\
TBPO-A (Ours)
& \second{\mstd{82.74}{0.38}} & \mstd{59.47}{1.43} & \second{\mstd{60.54}{0.39}} & \mstd{44.40}{1.46} & \second{\mstd{78.21}{1.21}} & \second{\mstd{39.04}{1.35}} & \second{60.73} \\

\midrule
\rowcolor{gray!15}
\multicolumn{8}{c}{\textbf{Llama3 8B}} \\
\midrule
SFT
& \mstd{79.46}{0.40} & \mstd{51.45}{1.46} & \mstd{61.19}{0.39} & \mstd{46.46}{1.45} & \mstd{75.69}{1.21} & \mstd{68.67}{1.37} & 63.82 \\
DPO
& {\mstd{81.23}{0.38}} & \mstd{56.05}{1.45} & \mstd{61.70}{0.38} & \mstd{48.47}{1.53} & \mstd{75.84}{1.20} & \mstd{74.20}{1.37} & 66.24 \\
TDPO
& \best{\mstd{83.29}{0.37}} & {\mstd{59.04}{1.43}} & {\mstd{61.86}{0.39}} & {\mstd{51.76}{1.55}} & \mstd{76.55}{1.19} & {\mstd{75.10}{1.36}} & {67.91} \\
TIS-DPO
& \mstd{81.37}{0.38} & \mstd{58.70}{1.42} & \mstd{60.77}{0.39} & \mstd{49.87}{1.57} & \mstd{75.14}{1.21} & \mstd{75.12}{1.37} & 66.82 \\
BPO
& \mstd{81.68}{0.37} & \mstd{54.95}{1.45} & \mstd{61.48}{0.38} & \mstd{47.88}{1.52} & \mstd{75.13}{1.21} & \mstd{73.22}{1.37} & 65.72 \\
\addlinespace[3pt]
TBPO-Q (Ours)
& \mstd{81.90}{0.38} & \second{\mstd{64.07}{1.40}} & \best{\mstd{64.92}{0.38}} & \best{\mstd{53.36}{1.52}} & \second{\mstd{79.08}{1.14}} & \second{\mstd{78.92}{1.12}} & \second{70.38} \\
TBPO-A (Ours)
& \second{\mstd{81.97}{0.38}} & \best{\mstd{64.08}{1.40}} & \second{\mstd{64.84}{0.39}} & \second{\mstd{53.29}{1.53}} & \best{\mstd{79.63}{1.12}} & \best{\mstd{79.22}{1.21}} & \best{70.50} \\

\bottomrule
\end{tabular}%
}
\vspace{0.05in}
\caption{General capabilities on the Open LLM Leaderboard suite for Mistral~7B~v0.1 and Llama~3~8B. Scores are reported by \texttt{lm-eval-harness} (higher is better) with $\pm$ standard error. Best is in \textbf{bold}, second-best is \underline{underlined}.}
\vspace{-0.15in}
\label{tab:general_style_like_example}
\end{table*}

\subsection{Token-level Bregman ratio matching}
\label{sec:bregman}
We now use density-ratio matching to derive a tractable optimization objective from the token-level BT preference model.
Given a time-step $t$, we could naturally define the BT preference model for the sequences up to the $t$-th token:
\begin{equation}
p_{\text{data}}\left(y_{\leq t}^{w}\succ y_{\leq t}^{l}\mid x\right)=\prod_{i=1}^{t}p_{\text{data}}\left(y_{i}^{w}\succ y_{i}^{l}\mid x,y_{<i}^{w},y_{<i}^{l}\right).\label{eq:uptot_pref}
\end{equation}

It is evident that we have the recursive formula:
\begin{align}
p_{\text{data}}\left(y_{\leq t}^{w}\succ y_{\leq t}^{l}\mid x\right) & =p_{\text{data}}\left(y_{t}^{w}\succ y_{t}^{l}\mid x,y_{<t}^{w},y_{<t}^{l}\right)\nonumber \\
 & p_{\text{data}}\left(y_{\leq t-1}^{w}\succ y_{\leq t-1}^{l}\mid x\right).\label{eq:recursive}
\end{align}

Preference optimization can be reformulated as a matching problem between the empirical data ratio  
$R^t_{\text{data}}(y^w_t, y^l_t \mid y^w_{<t}, y^l_{<t}, x)$  
and the model ratio  
$R^t_{\theta}(y^w_t, y^l_t \mid y^w_{<t}, y^l_{<t}, x)$, where we have defined
\begin{gather}
R_{\text{data}}^{t}=\frac{p_\text{data}(y_{t}^{w}\prec y_{t}^{l}\mid x,y_{<t}^{w},y_{<t}^{l})}{p_\text{data}(y_{t}^{w}\succ y_{t}^{l}\mid x,y_{<t}^{w},y_{<t}^{l})},\nonumber \\
R_{\theta}^{t}=\left(\frac{\pi_{\theta}\left(y_{t}^{l}\mid[x,y_{<t}^{l}]\right)\pi_{\mathrm{ref}}\!\left(y_{t}^{w}\mid[x,y_{<t}^{w}]\right)}{\pi_{\mathrm{ref}}\!\left(y_{t}^{l}\mid[x,y_{<t}^{l}]\right)\pi_{\theta}\!\left(y_{t}^{w}\mid[x,y_{<t}^{w}]\right)}w_{t}\right)^{\beta}.\label{eq:token_matching}
\end{gather}
We define the $t$-th time step loss using the Bregman ratio matching framework in Section \ref{sec:density_matching} as:
\footnotesize{
\begin{multline*}
D_{h}^{t}\left(R_{\text{data}}^{t},R_{\theta}^{t}\right)=\\
\mathbb{E}_{p_{\text{data}}\left(y_{\leq t}^{w}\succ y_{\leq t}^{l}\mid x\right)}\left[h\left(R_{\text{data}}^{t}\right)-h\left(R_{\theta}^{t}\right)-h^{'}\left(R_{\theta}^{t}\right)\left(R_{\text{data}}^{t}-R_{\theta}^{t}\right)\right].
\end{multline*}
}
\normalsize
The final loss is the average of the component losses:
\begin{equation}
D_{h}\left(R_{\text{data}},R_{\theta}\right)=\frac{1}{T}\sum_{t=1}^{T}D_{h}^{t}\left(R_{\text{data}}^{t},R_{\theta}^{t}\right),\label{eq:total_loss}
\end{equation}
where $T=\min(|y^w|,|y^l|)$ is the length of the shorter response among the chosen and rejected sequences.

In the following theorem, we show that under the sufficient model capacity, by matching the density ratio at the token level, we can recover the optimal policy in the Objective \ref{objective}.
\begin{theorem}
\label{optimality-theorem}
    Under sufficient model capacity, we have $\pi_{\theta^{*}}=\pi_{\hat{\theta}}$ where $\pi_{\hat{\theta}}=\text{argmin}{}_{\pi_{\theta}}\,D_{h}\left(R_{\text{data}},R_{\theta}\right)$.
\end{theorem}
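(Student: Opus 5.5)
The argument rests on the basic property of a Bregman divergence. For strictly convex $h$, the pointwise term $B_h(R_{\text{data}}^t\|R_\theta^t)=h(R_{\text{data}}^t)-h(R_\theta^t)-h'(R_\theta^t)(R_{\text{data}}^t-R_\theta^t)$ is nonnegative, and it vanishes if and only if $R_\theta^t=R_{\text{data}}^t$.

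Each component loss $D_h^t$ is an expectation of this nonnegative quantity under $p_{\text{data}}(y^w_{\le t}\succ y^l_{\le t}\mid x)$. So each $D_h^t\ge 0$, and the average $D_h(R_{\text{data}},R_\theta)$ in Eq.~\eqref{eq:total_loss} is also $\ge 0$. The global minimum value $0$ is attained exactly by those $\theta$ for which $R_\theta^t=R_{\text{data}}^t$ holds for every $t\le T$, almost surely on the support of the data distribution.

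\textbf{Step 1: the optimal policy attains zero loss.} I will exhibit $\pi_{\theta^*}$ as a zero-loss point. Rearranging Proposition~\ref{prop:q_token_ratio} (for TBPO-Q) or Proposition~\ref{pro:a_token_ratio} (for TBPO-A) and taking reciprocals gives
\[
\frac{\pi_{\theta^*}(y^l_t\mid[x,y^l_{<t}])\,\pi_{\mathrm{ref}}(y^w_t\mid[x,y^w_{<t}])}{\pi_{\mathrm{ref}}(y^l_t\mid[x,y^l_{<t}])\,\pi_{\theta^*}(y^w_t\mid[x,y^w_{<t}])}\,w_t=\left(\frac{p_{\text{data}}(y^w_t\prec y^l_t\mid\cdot)}{p_{\text{data}}(y^w_t\succ y^l_t\mid\cdot)}\right)^{1/\beta}.
\]
Raising both sides to the power $\beta$ gives exactly $R^t_{\theta^*}=R^t_{\text{data}}$ for every $t$ and every triple. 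Since $\pi_{\theta^*}$ lies in the model class by the capacity assumption, it is a global minimizer of $D_h$.

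\textbf{Step 2: every minimizer coincides with $\pi_{\theta^*}$.} Let $\pi_{\hat\theta}$ be any minimizer. Its loss is $0$, so strict convexity forces $R^t_{\hat\theta}=R^t_{\theta^*}$ almost surely for all $t$. Since $t\mapsto t^{\beta}$ is injective on the positives and $w_t$ is action-independent (it is fixed by the optimal policy's $Z$ or KL terms, not by $\theta$), this yields
\[
\frac{\pi_{\hat\theta}(y^l_t\mid s^l_t)}{\pi_{\hat\theta}(y^w_t\mid s^w_t)}=\frac{\pi_{\theta^*}(y^l_t\mid s^l_t)}{\pi_{\theta^*}(y^w_t\mid s^w_t)}.
\]
Equivalently, $\log\pi_{\hat\theta}(a\mid s)-\log\pi_{\theta^*}(a\mid s)$ takes the same value $c$ at $(s^w_t,y^w_t)$ and $(s^l_t,y^l_t)$.

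\textbf{The main obstacle} is turning this pairwise equality into equality of policies. Two issues arise: the equality only ties pairs of state--action points, and it only holds on the data support. I would first argue that under sufficient coverage the pairs connect all relevant $(s,a)$. That makes the difference a single global constant $c$, because the data pairs link different states through shared prompts, and prefixes of varying lengths link across $t$.

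The constant is then eliminated by normalization. Varying the action at a fixed state shows $\pi_{\hat\theta}(\cdot\mid s)=e^{c}\pi_{\theta^*}(\cdot\mid s)$, and summing over actions forces $c=0$. Hence $\pi_{\hat\theta}=\pi_{\theta^*}$ on the support.

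If the coverage argument is delicate, the fallback is to interpret ``sufficient capacity'' as stating that the minimizer set is exactly $\{\theta: R_\theta^t=R_{\text{data}}^t\ \forall t\}$. Under that reading, the conclusion follows by identifying this ratio-matching condition with the characterization of $\pi_{\theta^*}$ given in Propositions~\ref{prop:q_token_ratio} and~\ref{pro:a_token_ratio}, mirroring the DPO/BPO argument of \citet{kim2025preferenceoptimizationestimatingratio}.
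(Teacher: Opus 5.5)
Your argument is correct under the same full-support assumption the paper makes, but it closes the proof by a different route.

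The paper takes for granted that the minimizer satisfies $R^t_{\hat\theta}=R^t_{\text{data}}$. It multiplies these token-level identities over $t=1,\dots,T$ via the recursion in Eq.~\eqref{eq:recursive}, obtaining a sequence-level identity for $\pi_{\hat\theta}(y^w_{\le T}\mid x)/\pi_{\hat\theta}(y^l_{\le T}\mid x)$ with $w_x=\prod_t w_t$. It derives the same identity for $\pi_{\theta^*}$ from the two Propositions, and then invokes the completeness property of the concrete score to conclude the policies are equal.

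You instead first verify that $\pi_{\theta^*}$ itself attains zero loss. This is exactly what justifies the paper's implicit claim that the minimum value is $0$, and it is where ``sufficient capacity'' genuinely enters. You then compare $\hat\theta$ with $\theta^*$ directly at the token level and finish by normalizing each conditional $\pi(\cdot\mid s)$.

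Your route is more elementary and self-contained. It needs no external completeness lemma, and it skips the aggregation over $t$, which is unnecessary because the per-step identities already determine each conditional once normalized. The paper's route instead yields a DPO/BPO-style sequence-level statement tied to the prefix-preference recursion.

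One claim in your connectivity step is inaccurate, though harmless. A step-$t$ constraint only pairs $[x,y^w_{<t}]$ with $[x,y^l_{<t}]$, i.e.\ states with the same prompt and the same prefix length $t-1$. So nothing links different $t$ or different prompts, and the log-difference is constant only within each (prompt, length) class, not globally. Normalization at any state in a class forces that class's constant to be $0$, so the conclusion stands once you drop the ``single global constant'' assertion.
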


The loss function $D_{h}\left(R_{\text{data}},R_{\theta}\right)$ is not tractable to train a policy model. We turn it to an equivalent tractable objective function in the following theorem.

\begin{theorem}
\label{theorem:loss}
    We have $\mathcal{L}\left(\theta,p_{\text{data}}\right)=D_{h}\left(R_{\text{data}},R_{\theta}\right)+\text{const}$ where we define
    {\footnotesize
\begin{equation}
\begin{split}
\mathcal{L}\left(\theta,p_{\text{data}}\right)
&=
\mathbb{E}_{p_{\text{data}}\left(y^{w}\succ y^{l}\mid x\right)}
\Biggl[
\frac{1}{T}\sum_{t=1}^{T}
\Bigl(
h'\left(R_{\theta}^{t}\right)R_{\theta}^{t}
\\
&\qquad
- h\left(R_{\theta}^{t}\right)
- h'\left(\left(R_{\theta}^{t}\right)^{-1}\right)
\Bigr)
\Biggr]
\end{split}
\label{eq:loss}
\end{equation}}

\end{theorem}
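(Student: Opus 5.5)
Since the two expressions are linear in the expectation, the plan is to fix a prompt $x$ and a time step $t$ and treat the expectation over $p_{\text{data}}$ as a sum over the two label orientations. This is the same device used in BPO \citep{kim2025preferenceoptimizationestimatingratio} to obtain a tractable form of the Bregman ratio-matching loss. Concretely, for an unordered pair of responses $\{y,y'\}$, the ordered sample $(y^w,y^l)=(y,y')$ appears with probability proportional to $p_{\text{data}}(y_{\le t}\succ y'_{\le t}\mid x)$, and the swapped sample $(y',y)$ with probability proportional to $p_{\text{data}}(y'_{\le t}\succ y_{\le t}\mid x)$. Two symmetries are needed. First, swapping $w\leftrightarrow l$ sends $R^t_\theta$ to $(R^t_\theta)^{-1}$. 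This holds because the policy-ratio part inverts, and $w_t$ also inverts: in both Proposition~\ref{prop:q_token_ratio} and Proposition~\ref{pro:a_token_ratio}, $\log w_t$ is antisymmetric in the two prefixes. Second, $R^t_{\text{data}}$ likewise becomes $(R^t_{\text{data}})^{-1}$.

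Next, expand $D^t_h$. The term $\mathbb{E}[h(R^t_{\text{data}})]$ does not depend on $\theta$ and goes into the constant. The remaining terms are $\mathbb{E}[h'(R^t_\theta)R^t_\theta - h(R^t_\theta)]$, which already match the first two terms of \eqref{eq:loss}, and the cross term $-\mathbb{E}[h'(R^t_\theta)R^t_{\text{data}}]$. The key step is to rewrite this cross term. Writing $p^\succ$ and $p^\prec$ for the token-level preference probabilities, we have $R^t_{\text{data}}=p^\prec/p^\succ$. Multiplying by the sampling weight of the sample, we want to convert $\mathbb{E}_{(y^w,y^l)}[h'(R^t_\theta)R^t_{\text{data}}]$ into $\mathbb{E}_{(y^w,y^l)}[h'((R^t_\theta)^{-1})]$ by relabeling $w\leftrightarrow l$. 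Under the recursive factorization \eqref{eq:recursive}, the weight of the sample is $p_{\text{data}}(y^w_{\le t}\succ y^l_{\le t}\mid x)=p^\succ\cdot p_{\text{data}}(y^w_{\le t-1}\succ y^l_{\le t-1}\mid x)$. The factor $p^\succ$ cancels against the denominator of $R^t_{\text{data}}$, leaving $p^\prec$ times the prefix term. This is exactly the weight of the swapped sample at step $t$, and under the swap $h'(R^t_\theta)$ becomes $h'((R^t_\theta)^{-1})$.

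The main obstacle is the prefix factor $p_{\text{data}}(y^w_{\le t-1}\succ y^l_{\le t-1}\mid x)$. It does not automatically transform into the swapped prefix probability, so the change of measure leaves a residual ratio $\prod_{i<t} p^\prec_i/p^\succ_i$. I would handle this in one of two ways:
\begin{itemize}
\item argue, as the definition of $D^t_h$ implicitly does, that at step $t$ the comparison is conditioned on the given prefixes, so the prefix factor is common to both orientations and cancels; or
\item show that the sequence-level label distribution $p_{\text{data}}(y^w\succ y^l\mid x)$ in \eqref{eq:loss} marginalizes appropriately, so that the swap acts only on the step-$t$ factor.
\end{itemize}
With either justification, averaging over $t=1,\dots,T$ via \eqref{eq:total_loss} gives $\mathcal{L}(\theta,p_{\text{data}})=D_h(R_{\text{data}},R_\theta)+\text{const}$, where the constant is $\frac1T\sum_t\mathbb{E}[h(R^t_{\text{data}})]$. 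Here $T=\min(|y^w|,|y^l|)$ is symmetric under the swap, so the averaging is unaffected by the relabeling.
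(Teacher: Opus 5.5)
\paragraph{Same route as the paper; the step you flag is a real gap that neither you nor the paper closes.}
Your outline is the paper's proof step for step: expand $D^t_h$, put $\mathbb{E}[h(R^t_{\text{data}})]$ into the constant, use \eqref{eq:recursive} to cancel $p^\succ_t$ against the denominator of $R^t_{\text{data}}$, relabel $w\leftrightarrow l$ so that $p^\prec_t\,h'(R^t_\theta)$ becomes $p^\succ_t\,h'((R^t_\theta)^{-1})$, recombine, and average over $t$. Here $p^\succ_i,p^\prec_i$ are the step-$i$ token-level probabilities. The obstacle you flag is the step the paper does not justify. It writes the relabelling as a pointwise identity for the step-$t$ factors inside $\mathbb{E}_{p_{\text{data}}(y^w_{\le t-1}\succ y^l_{\le t-1}\mid x)}$, i.e.\ it tacitly takes your option (a).

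\paragraph{Why option (a) fails.}
The relabelling that inverts $R^t_\theta$ must exchange the prefixes together with the tokens. Your first symmetry needs this, since it uses that $\log w_t$ is antisymmetric in the two \emph{prefixes}. If you swap only $y^w_t,y^l_t$ and hold the prefixes fixed, you get factors such as $\pi_\theta(y^l_t\mid[x,y^w_{<t}])$ and an unchanged $w_t$, not $(R^t_\theta)^{-1}$. Once the prefixes are exchanged, the weight $\prod_{i<t}p^\succ_i$ becomes $\prod_{i<t}p^\prec_i$. So it is not common to the two orientations, and the measure $p_{\text{data}}(y^w_{\le t}\succ y^l_{\le t}\mid x)$ in $D^t_h$ does not condition it away. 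Under the measure you adopt (a swap-symmetric base measure on pairs, oriented in proportion to that probability), the exact identity is
\[
\mathbb{E}\bigl[h'(R^t_\theta)\,R^t_{\text{data}}\bigr]=\mathbb{E}\Bigl[h'\bigl((R^t_\theta)^{-1}\bigr)\prod_{i<t}R^i_{\text{data}}\Bigr].
\]
The unknown residual multiplies a $\theta$-dependent term, so it cannot go into the constant.

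\paragraph{The residual changes what $\mathcal{L}$ estimates.}
Pair each ordered sample with its swap under the step-$t$ measure, and set $r=R^t_\theta$, $f(r)=h'(r)r-h(r)-h'(1/r)$, $a=\prod_{i\le t}p^\succ_i$, $b=\prod_{i\le t}p^\prec_i$. The step-$t$ summand of $\mathcal{L}$ becomes $a f(r)+b f(1/r)$. Its derivative in $r$ is $(ar-b)\bigl(h''(r)+h''(1/r)/r^{3}\bigr)$, so it is minimized at $R^t_\theta=\prod_{i\le t}R^i_{\text{data}}$. By contrast, $D^t_h$ is minimized at $R^t_{\text{data}}$. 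So for $t\ge 2$ the two step-$t$ objectives do not differ by a constant as functionals of $R^t_\theta$. Your relabelling is exact only for $t=1$.

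\paragraph{Option (b) and the final marginalization.}
Option (b) cannot help, because marginalizing the later tokens does not change how the earlier ones are oriented. A second unjustified step remains. Replacing $\mathbb{E}_{p_{\text{data}}(y^w_{\le t}\succ y^l_{\le t}\mid x)}$ by $\mathbb{E}_{p_{\text{data}}(y^w\succ y^l\mid x)}$ is implicit in your last sentence and explicit in the paper. It reweights the prefix pair by $\mathbb{E}\bigl[\prod_{t<i\le T}p^\succ_i\mid y^w_{\le t},y^l_{\le t}\bigr]$, which is not constant in general.

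\paragraph{What would make the statement hold.}
You have to strengthen the setup, not just the argument. For instance, define the step-$t$ expectation over a swap-symmetric distribution of prefix pairs, weighted only by $p^\succ_t$. Your relabelling is then exact conditionally on the unordered prefix pair. You would then need to state the identification of that measure with the sequence-level data distribution as an explicit assumption.
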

The loss in Eq.~\eqref{eq:loss} depends only on the ratio $R_{\theta}^t$, hence applies identically to TBPO-Q and TBPO-A.
Different ratio estimation methods differ only in their choice of function $h$. Following BPO \cite{kim2025preferenceoptimizationestimatingratio}, we employ $h$ function as Scaled Basu’s power divergence (SBA), with following formula $h(R) = \frac{R^{1+\lambda} - R}{s\lambda(\lambda+1)}$ for better training flexibility and tuning flexibility. The $\lambda$ parameter lets you control whether to focus on confident samples (low $R_{\theta}$) or uncertain ones ($R_{\theta} \approx 1$), which can be useful for handling noisy preference data.



\subsection{Overall principle of our approach}
The standard Bradley--Terry (BT) model encourages the entire sequence $y^w \succ y^l$.
In contrast, we propose a token-level BT model that encourages progressively better substrings $y^w_{\le t} \succ y^l_{\le t}$ as $t$ varies according to the recursive definition in Eq.~\eqref{eq:recursive}.
Our goal is not to treat generation only as a whole-sequence preference problem, but to encourage the model to form good partial reasoning early and maintain it throughout generation.
In this way, stronger intermediate prefixes can support further high-quality continuation, which helps address credit assignment in a more fine-grained way and can lead to more concise, shorter, and more precise responses.

From this perspective, sequence-level preference labels are not merely copied to every timestep as hard tokenwise win/loss annotations.
Instead, they induce a sequence of prefix-conditioned local comparisons, which is exactly the structure captured by our token-level BT formulation.
This is the key difference from prior approaches whose underlying preference model remains sequence-level and mainly redistributes the same global signal across positions.
Our formulation instead makes the preference modeling itself token-level, so that better local decisions are encouraged progressively during autoregressive generation.

We also note that a naive decomposition could overemphasize locally plausible but globally suboptimal decisions.
TBPO is designed to mitigate exactly this issue.
First, supervision is defined over progressively extended prefixes, not isolated tokens, which preserves sequential dependency.
Second, the state-dependent correction weight reduces mismatch when comparing next-token decisions made under different prefix contexts, preventing the model from attributing the entire sequence-level preference gap to a single step.
In this sense, TBPO should be viewed as a principled subsequence-level surrogate derived from sequence-level preferences, together with an explicit mechanism to reduce the local--global mismatch, rather than as an assumption that improving local token decisions is always sufficient by itself.

\begin{figure*}[t] 
  \centering
  \includegraphics[width=0.98\textwidth,page=1]{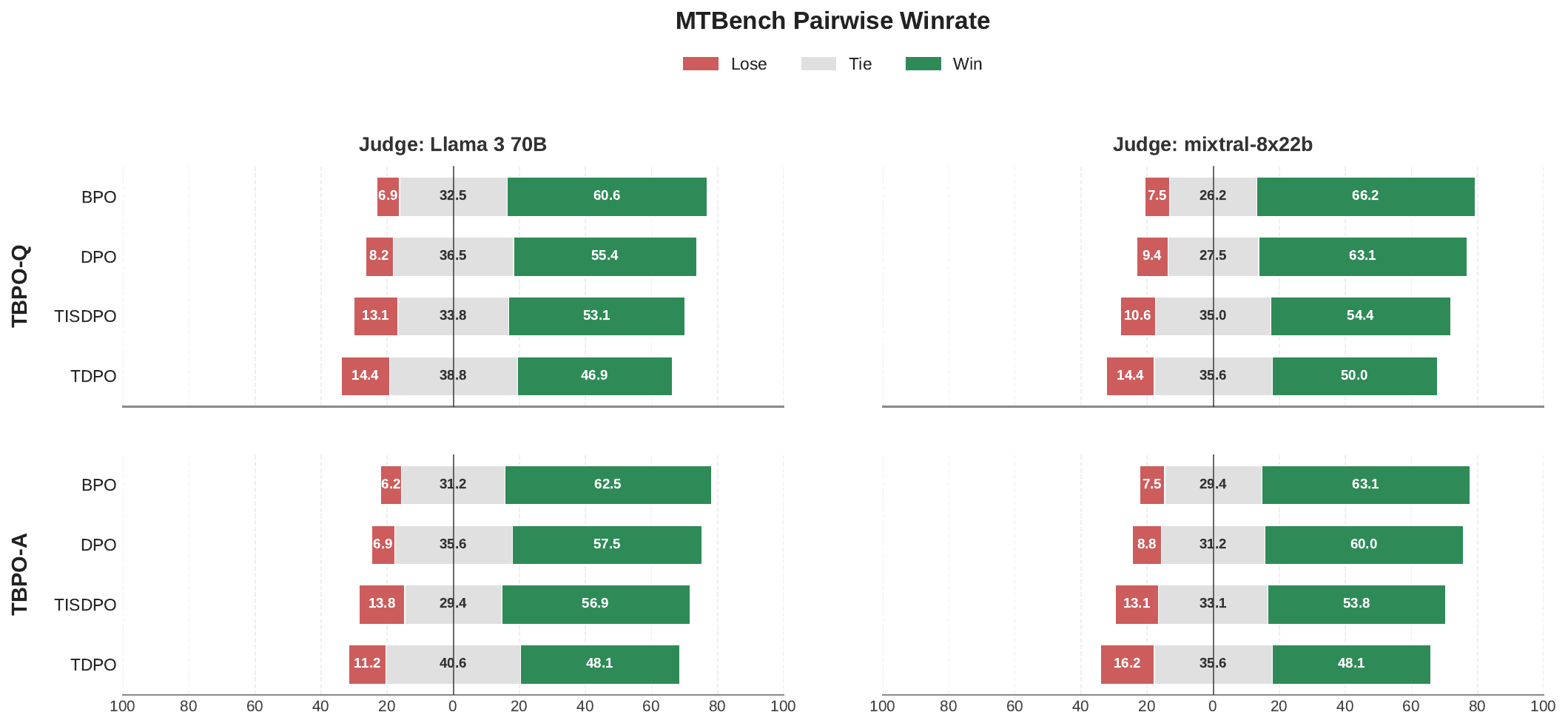}
  \caption{MT-Bench pairwise win/tie/lose rates for TBPO-Q (top) and TBPO-A (bottom) against prior preference-optimization baselines, evaluated by two LLM judges. TBPO achieves higher win rates with low loss rates across both judges, and the advantage persists even against the strongest baseline.}
  \label{fig:mtbench}
  \vspace{-0.1in}
\end{figure*}

\section{Experiments}
\label{sec:experiments}

In this section, we present an empirical study of Token-level Bregman Preference Optimization (TBPO). We compare TBPO-Q and TBPO-A to existing preference-optimization methods, with particular attention to alignment quality, training stability, and generation diversity, evaluating whether the theoretical advantages of token-level ratio matching manifest in practice.
\subsection{Experiments settings}

\paragraph{Backbones and training data.}
We conduct experiments with two backbone models. Each setup initializes from a publicly available supervised fine-tuning (SFT) checkpoint, followed by preference optimization on the corresponding dataset. Specifically, Mistral 7B v0.1 is initialized from \href{https://huggingface.co/HuggingFaceH4/mistral-7b-sft-alpha}{mistral-7b-sft-alpha} and trained on \href{https://huggingface.co/datasets/HuggingFaceH4/ultrafeedback_binarized}{UltraFeedback Binarized}, while Llama 3 8B is initialized from \href{https://huggingface.co/RLHFlow/LLaMA3-SFT-v2}{LLaMA3-SFT-v2} and trained on \href{https://huggingface.co/datasets/princeton-nlp/llama3-ultrafeedback-armorm}{Llama3-UltraFeedback-ArmoRM}.
\vspace{-0.1in}
\paragraph{Baselines.}
We compare TBPO-Q and TBPO-A against various prior preference-optimization baselines, including SFT, DPO \cite{rafailov2023dpo}, TDPO \cite{zeng2024tokenleveldirectpreferenceoptimization}, TIS-DPO \cite{tis-dpo}, and BPO-SBA \cite{kim2025preferenceoptimizationestimatingratio}. See more details in Appendix~\ref{app:baselines}.

\vspace{-0.1in}
\paragraph{LLM judges.}
Some benchmarks, including Anthropic HH-RLHF, TL;DR, and MT-Bench, require pairwise evaluation with an LLM-as-a-judge framework. Prior work has analyzed the reliability and biases of LLM judges~\cite{zhang2025compassjudger2generalistjudgemodel, han2025judgesverdictcomprehensiveanalysis}; following these findings, we adopt task-specific judge pairs. For Anthropic HH-RLHF and TL;DR, we use \href{https://huggingface.co/meta-llama/Meta-Llama-3-70B-Instruct}{Llama 3 70B Instruct} and \href{https://huggingface.co/deepseek-ai/DeepSeek-V3}{DeepSeek-V3}, while for MT-Bench we use \href{https://huggingface.co/meta-llama/Meta-Llama-3-70B-Instruct}{Llama 3 70B Instruct} and \href{https://huggingface.co/mistralai/Mixtral-8x22B-Instruct-v0.1}{Mixtral-8x22B Instruct v0.1}. Additional details are provided in Appendix~\ref{app:experiments}.

\begin{figure*}[t] 
  \centering
  \includegraphics[width=0.9\textwidth,page=1]{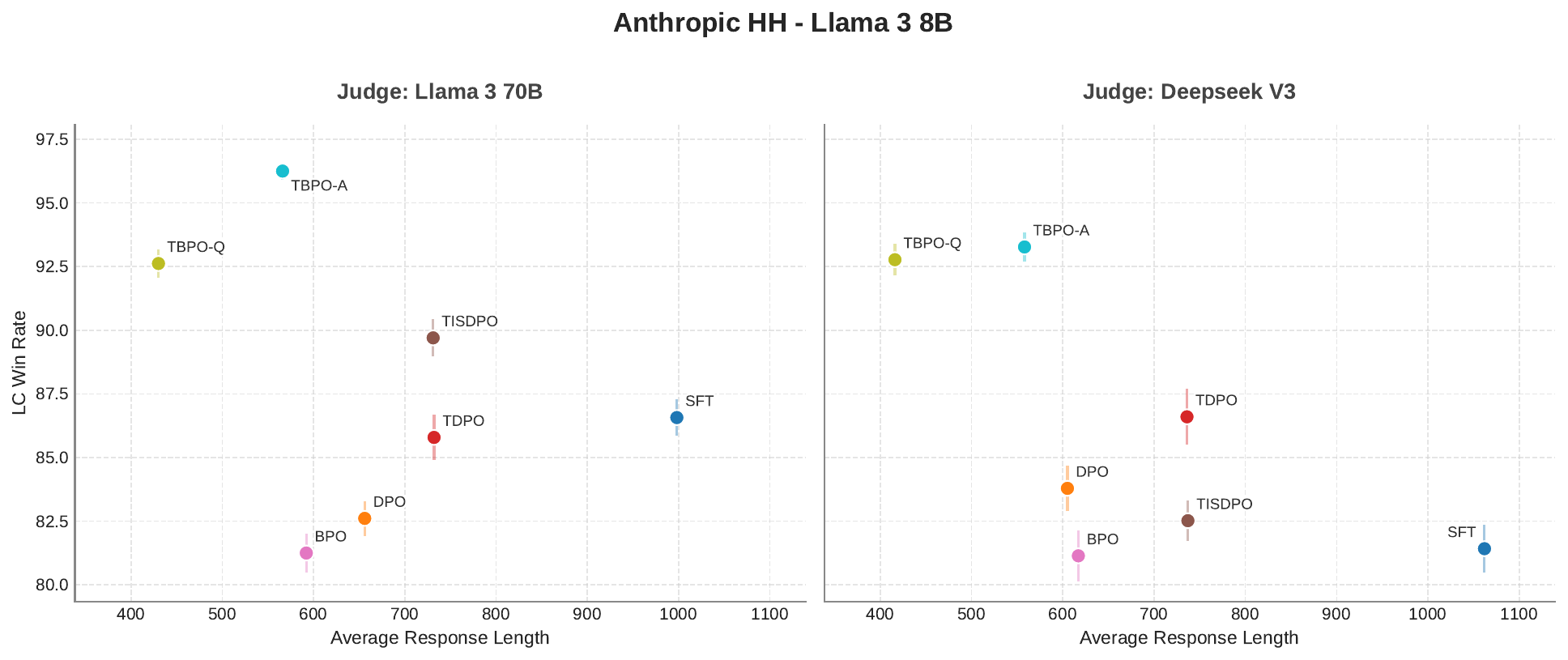}
  \caption{LC win rate vs.\ average response length against the dataset-preferred completion for Llama~3~8B, evaluated by two LLM judges (Llama~3~70B, DeepSeek-V3); error bars are $\pm$1 s.e.\ over 200 prompts. TBPO leads with shorter outputs, indicating gains beyond verbosity and consistent across judges.}
  \vspace{-0.15in}
  \label{fig:hh}
\end{figure*}

\subsection{General Capabilities}
A common concern in preference optimization is over-specialization to preference data, which may harm general capabilities. 
We evaluate general performance using the \textbf{HuggingFace Open LLM Leaderboard} protocol~\cite{huggingface_open_llm_leaderboard_v1} with the Language Model Evaluation Harness~\cite{eleutherai_lm_eval_harness_concept} on six benchmarks spanning commonsense reasoning (ARC~\cite{clark2018think}, HellaSwag~\cite{zellers2019hellaswag}, Winogrande~\cite{sakaguchi2021winogrande}), multi-task understanding (MMLU~\cite{hendrycks2020mmlu}), factuality (TruthfulQA~\cite{lin2021truthfulqa}), and math reasoning (GSM8k~\cite{cobbe2021gsm8k}). 
Table~\ref{tab:general_style_like_example} shows that TBPO improves the overall average across both backbones while remaining competitive on all benchmarks, indicating stronger alignment without sacrificing core capabilities.

On \textbf{Mistral 7B v0.1}, sequence-level preference optimization already improves over SFT (DPO: 59.30 vs.\ SFT: 55.59). TBPO further improves the average: \textbf{TBPO-Q is best} (\textbf{60.95}) with TBPO-A close behind (60.73), exceeding the strongest non-TBPO baseline (TIS-DPO: 59.66). Gains are broad and strongest on reasoning-heavy tasks: TBPO-Q attains the best GSM8K (39.34 vs.\ 34.87 for DPO and 18.49 for SFT) and also the best MMLU (60.82) and Winogrande (78.58). While DPO slightly reduces TruthfulQA vs.\ SFT (42.30 vs.\ 43.67), TBPO restores and improves it (44.47--44.40). Where other token-level methods peak (e.g., ARC for TDPO, TruthfulQA for TIS-DPO), TBPO remains highly competitive.

On \textbf{Llama 3 8B}, \textbf{TBPO-A is best on average} (\textbf{70.50}) with TBPO-Q close behind (70.38), improving substantially over TDPO (67.91). Improvements concentrate on ARC (+5 vs.\ TDPO; 64.08 vs.\ 59.04), GSM8K (+4; 79.22 vs.\ 75.10), and Winogrande (+3; 79.63 vs.\ 76.55). TBPO-A leads on ARC/GSM8K/Winogrande, while TBPO-Q attains the best MMLU (64.92) and TruthfulQA (53.36), showing robustness to the preference-model formulation.

The similar performance of TBPO-Q and TBPO-A is expected from the theory: the two variants differ mainly in how they approximate the same state-only correction term $w_t$, rather than in the core token-level preference objective. TBPO-Q represents $w_t$ as a partition-function or soft-value ratio and estimates it with a learned baseline head, whereas TBPO-A analytically removes the partition function and replaces it with a KL-based state baseline estimated directly from $\pi_{\mathrm{ref}}$ and $\pi_\theta$. Since both estimators target the same prefix-level correction, their optimization targets remain nearly identical when the estimators are stable. The near ties in Table~\ref{tab:general_style_like_example} therefore suggest that the main gain comes from the shared token-level ratio-matching formulation, while the particular estimator used for $w_t$ is a secondary implementation choice.

Overall, TBPO delivers the strongest mean performance across two model families, with large gains on reasoning-centric benchmarks and no evidence of systematic trade-offs on broader language understanding. To better understand these reasoning gains, we provide qualitative case studies on reasoning questions in Appendix~\ref{app:reasoning_case_study}.

\subsection{Generation Quality}

\paragraph{MT-Bench.} To evaluate generation quality, we run MT-Bench~\cite{zheng2023judging} under the standard LLM-as-a-judge protocol and report pairwise win/tie/lose rates against prior preference-optimization baselines. Figure~\ref{fig:mtbench} shows that TBPO-Q and TBPO-A consistently improve win rates with low loss rates; against the strongest baseline (TDPO), many comparisons are ties, but TBPO wins the majority of decisive (non-tie) cases. This suggesting the advantage is not driven by a small number of outliers. The overall ordering and effect sizes are consistent across both judges, supporting the conclusion that TBPO improves generation quality in a robust, judge-agnostic manner, and that TBPO-Q and TBPO-A are comparably strong.

\paragraph{Harmless and Helpfulness.}


To evaluate preference alignment in safety-sensitive dialogue, we test on 200 held-out samples from the \href{https://huggingface.co/datasets/Anthropic/hh-rlhf}{Anthropic HH-RLHF} dataset. 
Using an LLM-as-a-judge protocol, we report pairwise and length-controlled (LC) win rates~\cite{alpaca_eval}. 
Figure~\ref{fig:hh} shows LC win rate versus average response length, highlighting the verbosity--quality trade-off.

Across both judges, TBPO variants achieve the strongest preference alignment while producing shorter responses than competing methods, ruling out verbosity as the primary driver of higher win rates. SFT is the most verbose model yet does not achieve top performance, whereas TBPO attains the best LC win rates with substantially shorter generations. Among TBPO variants, TBPO-Q produces the most concise responses while remaining at the top of the win-rate plot, and TBPO-A achieves the highest (or near-highest) win rates with only a modest increase in length, indicating a controllable conciseness–performance trade-off. In contrast, strong sequence- and token-level baselines improve over SFT but generally require longer responses and still trail TBPO. The consistent ordering across judges and large margins relative to standard errors suggest TBPO’s gains reflect genuine improvements in helpful-and-harmless response quality rather than judge bias or verbosity effects.
\vspace{-0.1in}
\paragraph{Generation Diversity.}
\begin{figure}[t]
    \centering
    \includegraphics[width=0.9\linewidth,page=1]{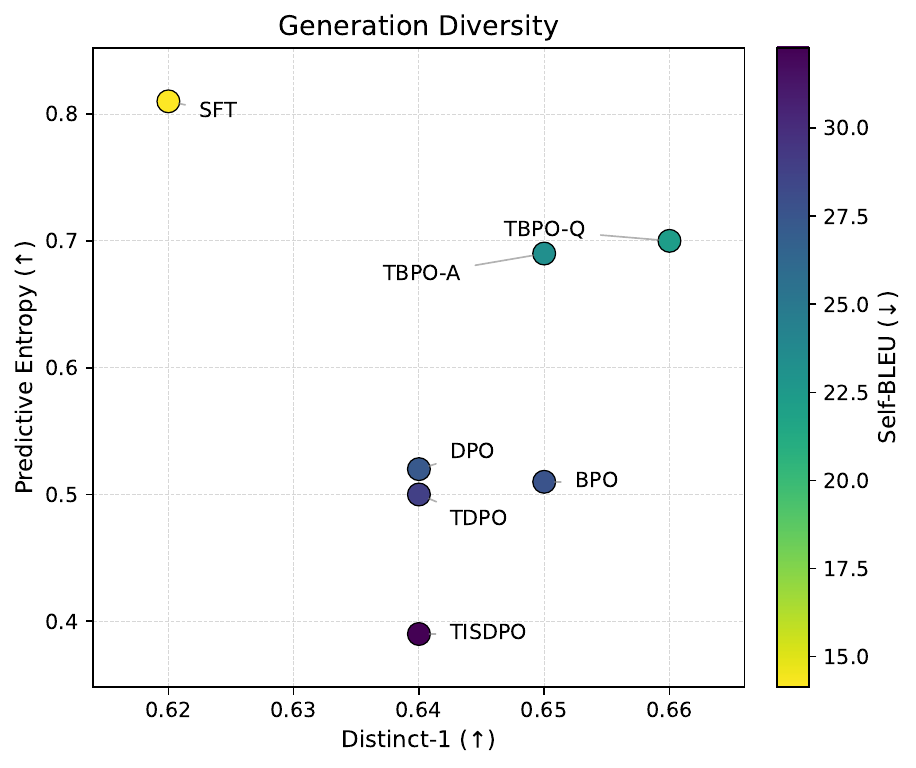}
    \caption{Generation diversity trade-offs: Distinct-1 vs. predictive entropy (higher is better), colored by self-BLEU (lower is better). TBPO achieves the best three-way trade-off across metrics.}
    \label{fig:diversity}
    \vspace{-0.2in}
\end{figure}
Preference optimization can reduce generation diversity~\cite{rlhf-diversity}. We evaluate diversity on 100 held-out prompts from the test split of \href{https://huggingface.co/datasets/HuggingFaceH4/ultrafeedback_binarized}{UltraFeedback Binarized} using predictive entropy~\cite{zeng2024tokenleveldirectpreferenceoptimization}, Self-BLEU~\cite{self-bleu}, and Distinct-1~\cite{distinct-1}. Figure~\ref{fig:diversity} summarizes the trade-off, where higher Distinct-1 and predictive entropy indicate greater diversity, while lower Self-BLEU indicates less redundancy.

Two patterns emerge. (1) DPO/TDPO/BPO have moderate Distinct-1 but lower predictive entropy and higher self-BLEU, implying samples become more similar and decoding more deterministic (mild mode-seeking). TISDPO amplifies this, achieving the lowest entropy and highest self-BLEU, i.e., the strongest collapse even when Distinct-1 alone does not fully reveal it. (2) TBPO shifts the Pareto frontier toward the desirable upper-right region while keeping self-BLEU favorable, giving the best three-metric trade-off: versus DPO-style methods it raises entropy and Distinct-1 without the usual across-sample diversity loss; versus SFT (high entropy, low self-BLEU, lower Distinct-1) it preserves non-collapsed sampling while increasing lexical variety. Overall, TBPO better balances preference satisfaction with diversity preservation, producing outputs that are less redundant across samples and less over-confident at the token level.

Additional experiments on out-of-domain tasks, ability to handle noisy data, ablation on per-state weight estimation and judge agreement analysis for TBPO are reported in Appendix~\ref{app:additional}.

\section{Conclusion}
\label{sec:conclusion}
We presented a principled token-level preference optimization framework aligned with the autoregressive nature of language models. We show theoretically that minimizing the proposed objective recovers a token-wise optimal policy, establishing a direct link between preference modeling and optimization behavior. Experiments across benchmarks and model families demonstrate consistent improvements over strong baselines, particularly in challenging reasoning and alignment settings.


\section*{Acknowledgements}
Trung Le was supported by the Air Force Office of Scientific Research under award number FA2386-25-1-4023 and the ARC Discovery Project grant DP250100262. Linh Ngo Van is funded by Vietnam National Foundation for Science and Technology
Development (NAFOSTED) under grant number 102.05-2025.16.

\section*{Impact Statement}
This work focuses on the development of a principled learning algorithm with theoretical guarantees. The primary impact of this research is methodological, contributing to the design of more reliable and well-understood machine learning systems. While the proposed method is evaluated on standard benchmarks, it is not tied to a specific high-risk application domain. We do not foresee any immediate negative societal consequences arising directly from this work; however, as with most advances in machine learning, downstream applications should be deployed responsibly and with appropriate consideration of ethical and societal implications.

\nocite{langley00}

\bibliography{ref}
\bibliographystyle{icml2026}

\newpage
\appendix
\onecolumn

\section{Theory Development}
\label{app:theory}

\subsection{Proof for Prop \ref{prop:q_token_ratio}}
\label{subsec:proof_q_token_ratio}
\begin{proof}
We prove the proposition by deriving the policy ratio induced by a
token-level Bradley--Terry model parameterized by the reference-policy
$Q$-function.

By the token-level Bradley--Terry model in Eq.~\eqref{eq:BT_Q}, we have
\begin{equation*}
p(y_t^w \succ y_t^l \mid x, y^w_{<t}, y^l_{<t})
=
\sigma\!\Big(
Q_{\pi_{\mathrm{ref}}}([x,y^w_{<t}], y_t^w)
-
Q_{\pi_{\mathrm{ref}}}([x,y^l_{<t}], y_t^l)
\Big).
\end{equation*}
Using $\sigma(u)=(1+\exp(-u))^{-1}$ and conditioning on the same pair of
prefixes, taking the odds ratio yields
\[
\frac{
p(y_t^w \prec y_t^l \mid x, y^w_{<t}, y^l_{<t})
}{
p(y_t^w \succ y_t^l \mid x, y^w_{<t}, y^l_{<t})
}
=
\exp\!\Big(
Q_{\pi_{\mathrm{ref}}}([x,y^l_{<t}], y_t^l)
-
Q_{\pi_{\mathrm{ref}}}([x,y^w_{<t}], y_t^w)
\Big).
\]

Substituting the closed-form expression of the reference-policy
$Q$-function from Eq.~\eqref{eq:tokenQformula} gives
\[
\begin{aligned}
\frac{
p(y_t^w \prec y_t^l \mid x, y^w_{<t}, y^l_{<t})
}{
p(y_t^w \succ y_t^l \mid x, y^w_{<t}, y^l_{<t})
}
&=
\exp\Bigg(
\beta \log \frac{\pi^*_\theta(y_t^l \mid [x,y^l_{<t}])}
{\pi_{\mathrm{ref}}(y_t^l \mid [x,y^l_{<t}])}
-
\beta \log \frac{\pi^*_\theta(y_t^w \mid [x,y^w_{<t}])}
{\pi_{\mathrm{ref}}(y_t^w \mid [x,y^w_{<t}])}
\\
&\qquad\quad
+
\beta \log Z([x,y^l_{<t}]; \beta)
-
\beta \log Z([x,y^w_{<t}]; \beta)
\Bigg).
\end{aligned}
\]

Rearranging terms, we obtain
\[
\frac{
p(y_t^w \prec y_t^l \mid x, y^w_{<t}, y^l_{<t})
}{
p(y_t^w \succ y_t^l \mid x, y^w_{<t}, y^l_{<t})
}
=
\left(
\frac{
\pi^*_\theta(y_t^l \mid [x,y^l_{<t}])\,
\pi_{\mathrm{ref}}(y_t^w \mid [x,y^w_{<t}])
}{
\pi_{\mathrm{ref}}(y_t^l \mid [x,y^l_{<t}])\,
\pi^*_\theta(y_t^w \mid [x,y^w_{<t}])
}
\right)^{\!\beta}
\exp\!\Big(
\beta \log \tfrac{Z([x,y^l_{<t}]; \beta)}{Z([x,y^w_{<t}]; \beta)}
\Big).
\]

Defining the prefix-dependent weight
\[
w_t
=
\exp\!\Big(
\log \tfrac{Z([x,y^l_{<t}]; \beta)}{Z([x,y^w_{<t}], \beta)}
\Big),
\]
the above expression can be written as
\[
\frac{
p(y_t^w \prec y_t^l \mid x, y^w_{<t}, y^l_{<t})
}{
p(y_t^w \succ y_t^l \mid x, y^w_{<t}, y^l_{<t})
}
=
\left(
\frac{
\pi^*_\theta(y_t^l \mid [x,y^l_{<t}])\,
\pi_{\mathrm{ref}}(y_t^w \mid [x,y^w_{<t}])
}{
\pi_{\mathrm{ref}}(y_t^l \mid [x,y^l_{<t}])\,
\pi^*_\theta(y_t^w \mid [x,y^w_{<t}])
}
\right)^{\!\beta}
w_t^{\beta}.
\]

Raising both sides to the power $1/\beta$ and rearranging terms yields
\begin{equation*}
\frac{
\pi^*_\theta(y_t^w \mid [x,y^w_{<t}])
}{
\pi^*_\theta(y_t^l \mid [x,y^l_{<t}])
}
=
\frac{
\pi_{\mathrm{ref}}(y_t^w \mid [x,y^w_{<t}])
}{
\pi_{\mathrm{ref}}(y_t^l \mid [x,y^l_{<t}])
}
\left(
\frac{
p(y_t^w \succ y_t^l \mid x, y^w_{<t}, y^l_{<t})
}{
p(y_t^w \prec y_t^l \mid x, y^w_{<t}, y^l_{<t})
}
\right)^{1/\beta}
w_t,
\end{equation*}
which completes the proof.
\end{proof}

\subsection{Proof for Prop \ref{pro:a_token_ratio}}
\label{subsec:proof_a_token_ratio}
\begin{proof}
We prove Proposition~\ref{pro:a_token_ratio} by deriving the policy ratio induced by
a token-level Bradley--Terry model parameterized by reference-policy advantages.

By definition, the token-level Bradley--Terry likelihood is given by
\begin{equation*}
p\!\left(
y_t^w \succ y_t^l
\,\middle|\,
x, y_{<t}^w, y_{<t}^l
\right)
=
\sigma\!\Big(
A_{\pi_{\mathrm{ref}}}([x,y_{<t}^w], y_t^w)
-
A_{\pi_{\mathrm{ref}}}([x,y_{<t}^l], y_t^l)
\Big).
\end{equation*}
Using $\sigma(u)=(1+e^{-u})^{-1}$ and taking the odds ratio while conditioning on the
same pair of prefixes yields
\begin{align*}
\frac{
p\!\left(
y_t^w \prec y_t^l
\,\middle|\,
x, y_{<t}^w, y_{<t}^l
\right)
}{
p\!\left(
y_t^w \succ y_t^l
\,\middle|\,
x, y_{<t}^w, y_{<t}^l
\right)
}
&=
\exp\!\Big(
A_{\pi_{\mathrm{ref}}}([x,y_{<t}^l], y_t^l)
-
A_{\pi_{\mathrm{ref}}}([x,y_{<t}^w], y_t^w)
\Big).
\end{align*}

Substituting the closed-form expression of the reference-policy advantage from
Eq.~\eqref{eq:tokenAformula} into the above expression gives
\begin{align*}
\frac{
p\!\left(
y_t^w \prec y_t^l
\,\middle|\,
x, y_{<t}^w, y_{<t}^l
\right)
}{
p\!\left(
y_t^w \succ y_t^l
\,\middle|\,
x, y_{<t}^w, y_{<t}^l
\right)
}
&=
\exp\Bigg(
\beta \log
\frac{
\pi_\theta^*(y_t^l \mid [x,y_{<t}^l])\,
\pi_{\mathrm{ref}}(y_t^w \mid [x,y_{<t}^w])
}{
\pi_\theta^*(y_t^w \mid [x,y_{<t}^w])\,
\pi_{\mathrm{ref}}(y_t^l \mid [x,y_{<t}^l])
}
\\
&\qquad\quad
+
\beta\Big[
D_{\mathrm{KL}}\!\big(
\pi_{\mathrm{ref}}(\cdot\mid[x,y_{<t}^l])
\,\big\|\,
\pi_\theta^*(\cdot\mid[x,y_{<t}^l])
\big)
\\[-2pt]
&\qquad\qquad\qquad
-
D_{\mathrm{KL}}\!\big(
\pi_{\mathrm{ref}}(\cdot\mid[x,y_{<t}^w])
\,\big\|\,
\pi_\theta^*(\cdot\mid[x,y_{<t}^w])
\big)
\Big]
\Bigg).
\end{align*}

Raising both sides to the power $1/\beta$ and rearranging terms yields
\begin{equation*}
\frac{
\pi_\theta^*(y_t^w \mid [x,y_{<t}^w])
}{
\pi_\theta^*(y_t^l \mid [x,y_{<t}^l])
}
=
\frac{
\pi_{\mathrm{ref}}(y_t^w \mid [x,y_{<t}^w])
}{
\pi_{\mathrm{ref}}(y_t^l \mid [x,y_{<t}^l])
}
\left(
\frac{
p\!\left(
y_t^w \succ y_t^l
\,\middle|\,
x, y_{<t}^w, y_{<t}^l
\right)
}{
p\!\left(
y_t^w \prec y_t^l
\,\middle|\,
x, y_{<t}^w, y_{<t}^l
\right)
}
\right)^{1/\beta}
\cdot w_t,
\end{equation*}
where
\begin{equation*}
w_t
=
\exp\!\Big(
D_{\mathrm{KL}}\!\big(
\pi_{\mathrm{ref}}(\cdot\mid[x,y_{<t}^l])
\,\big\|\,
\pi_\theta^*(\cdot\mid[x,y_{<t}^l])
\big)
-
D_{\mathrm{KL}}\!\big(
\pi_{\mathrm{ref}}(\cdot\mid[x,y_{<t}^w])
\,\big\|\,
\pi_\theta^*(\cdot\mid[x,y_{<t}^w])
\big)
\Big).
\end{equation*}
This establishes Proposition~\ref{pro:a_token_ratio}.
\end{proof}

\subsection{Proof for Theorem \ref{optimality-theorem}}
\begin{theorem*}
    Under sufficient model capacity, we have $\pi_{\theta^{*}}=\pi_{\hat{\theta}}$ where $\pi_{\hat{\theta}}=\text{argmin}{}_{\pi_{\theta}}\,D_{h}\left(R_{\text{data}},R_{\theta}\right)$.
\end{theorem*}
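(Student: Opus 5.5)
The plan is to exploit the standard property of Bregman divergences: for strictly convex $h$, $B_h(a\|b)\ge 0$ with equality iff $a=b$. Hence each per-step loss $D_h^t(R^t_{\text{data}},R^t_\theta)$ is nonnegative, and it vanishes exactly when $R^t_\theta=R^t_{\text{data}}$ holds $p_{\text{data}}(y^w_{\le t}\succ y^l_{\le t}\mid x)$-almost surely. The same then holds for the average $D_h=\frac1T\sum_t D_h^t$. The argument therefore has two halves: show that the global minimum value $0$ is attained, and identify the minimizer with $\pi_{\theta^*}$.

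\textbf{Attainment.} I would invoke Proposition~\ref{prop:q_token_ratio} (TBPO-Q) or Proposition~\ref{pro:a_token_ratio} (TBPO-A), according to the chosen score $S$. Rearranging the stated equality and raising it to the power $\beta$ shows that $\pi_{\theta^*}$ satisfies
\[
\left(\frac{\pi_{\theta^*}(y^l_t\mid[x,y^l_{<t}])\,\pi_{\mathrm{ref}}(y^w_t\mid[x,y^w_{<t}])}{\pi_{\mathrm{ref}}(y^l_t\mid[x,y^l_{<t}])\,\pi_{\theta^*}(y^w_t\mid[x,y^w_{<t}])}\,w_t\right)^{\beta}=R^t_{\text{data}}
\]
for every $t$ and every pair of prefixes. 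This is exactly $R^t_{\theta^*}=R^t_{\text{data}}$ as defined in Eq.~\eqref{eq:token_matching}, so $D_h(R_{\text{data}},R_{\theta^*})=0$. Sufficient model capacity means $\pi_{\theta^*}$ lies in the model class, so the minimum value of $D_h$ is $0$ and $\pi_{\theta^*}$ attains it.

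\textbf{Identification.} Let $\pi_{\hat\theta}$ be any minimizer. It must also achieve $0$, so strict convexity gives $R^t_{\hat\theta}=R^t_{\text{data}}=R^t_{\theta^*}$ on the support of the data, for all $t\le T$. Taking $\beta$-th roots and cancelling the reference terms yields
\[
\frac{\pi_{\hat\theta}(y^w_t\mid s^w_t)}{\pi_{\hat\theta}(y^l_t\mid s^l_t)}=\frac{\pi_{\theta^*}(y^w_t\mid s^w_t)}{\pi_{\theta^*}(y^l_t\mid s^l_t)}.
\]
Equivalently, $\log\pi_{\hat\theta}(a\mid s)-\log\pi_{\theta^*}(a\mid s)=c$ for one common constant $c$ on all supported (state, action) pairs, obtained by chaining comparisons through a connected comparison graph.

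\textbf{Main obstacle.} This last step is the delicate one: pairwise ratio equalities only pin down the policy up to an offset. I would first restrict to a single state, where different actions at that state appear against a common partner, to conclude that $\pi_{\hat\theta}(\cdot\mid s)\propto\pi_{\theta^*}(\cdot\mid s)$. Normalization over the vocabulary then forces the proportionality constant to equal $1$, giving $\pi_{\hat\theta}=\pi_{\theta^*}$.

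Two points need care. First, $w_t$ must be treated as the fixed state-only quantity defined through $\pi^*$ (via $Z$ or the KL baseline), not as a function of $\theta$; otherwise the cancellation step fails. Second, the argument needs a coverage assumption, namely that the data support every prefix/action pair, which I would fold into "sufficient model capacity". Under these two conventions the remaining steps are routine.
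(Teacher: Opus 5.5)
Your proof is correct. Its first half coincides with the paper's: nonnegativity and strict convexity of the Bregman divergence, together with full support, force $R^t_{\hat\theta}=R^t_{\text{data}}$ at every step. You also make explicit something the paper uses only tacitly. The value $0$ is attained because $\pi_{\theta^*}$ itself satisfies $R^t_{\theta^*}=R^t_{\text{data}}$ by Propositions~\ref{prop:q_token_ratio} and~\ref{pro:a_token_ratio}. That attainment is what ``sufficient model capacity'' must supply for the paper's step ``the minimizer therefore satisfies'' the matching condition to go through.

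The two routes then diverge:
\begin{itemize}
\item The paper multiplies the token-level identities over $t=1,\dots,T$ and uses the product form of $p_{\text{data}}$. This gives a DPO-like sequence-level identity for $\pi_{\hat\theta}(y_{\le T}\mid x)$ with aggregated weight $w_x=\prod_t w_t$. It derives the same identity for $\pi_{\theta^*}$, then invokes the completeness property of the concrete score (Meng et al.) to conclude that equal sequence-likelihood ratios force equal distributions.
\item You compare $R^t_{\hat\theta}$ and $R^t_{\theta^*}$ step by step and finish using only normalization of $\pi(\cdot\mid s)$ over the vocabulary.
\end{itemize}

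Your version is more elementary and self-contained. It needs no external completeness lemma. It avoids having to define the sequence-level odds $p_{\text{data}}(y^w_{\le T}\prec y^l_{\le T}\mid x)/p_{\text{data}}(y^w_{\le T}\succ y^l_{\le T}\mid x)$ as a product of per-step odds. It also yields equality of the per-state conditionals directly, rather than of distributions over $T$-prefixes with the passage to conditionals left implicit. The paper's aggregation buys an interpretation: token-level matching reproduces a DPO-style sequence-level ratio with correction $w_x$. The cost is that it discards information your argument uses.

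One small inaccuracy: comparisons only link state--action pairs with the same prompt $x$ and the same step $t$. So there is no single global constant $c$, only one per $(x,t)$ component. This is harmless, because your per-state normalization step never needs global connectivity.

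Your caveat that $w_t$ must be the fixed, $\theta$-independent quantity defined through $\pi^*$ is exactly what the paper assumes implicitly when it attaches the same $w_x$ to both policies.
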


\begin{proof}
Let $\arg\min_{\pi_{\theta}} D_h(R_{\mathrm{data}}\|R_{\theta}) = \pi_{\hat{\theta}}$.
We have the following decomposition::
\begin{equation}
D_{h}\left(R_{\text{data}},R_{\hat{\theta}}\right)=\frac{1}{T}\sum_{t=1}^{T}D_{h}^{t}\left(R_{\text{data}}^{t},R_{\hat{\theta}}^{t}\right)
= \frac{1}{T}\sum_{t=1}^{T}\mathbb{E}_{p_{\text{data}}\left(y_{\leq t}^{w}\succ y_{\leq t}^{l}\mid x\right)}\!\left[B_h\!\left(R_{\text{data}}^{t},R_{\hat{\theta}}^{t}\right)\right].
\end{equation}
where $T$ denotes the minimum sequence length.

By the non-negativity of the Bregman divergence,
$B_h\!\left(R_{\text{data}}^{t}, R_{\hat{\theta}}^{t}\right)$
attains its minimum value $0$ if and only if
$R_{\text{data}}^{t} = R_{\hat{\theta}}^{t}$
for a given $(y^w_{<t}, y^l_{<t}, x)$.
Assuming that the data distribution has full support,
the minimizer $\pi_{\hat{\theta}}$ therefore satisfies
$R_{\text{data}}^{t} = R_{\hat{\theta}}^{t}$
for all $(y^w_{<t}, y^l_{<t}, x)$.
Substituting the definitions of
$R_{\text{data}}^{t}$ and $R_{\hat{\theta}}^{t}$
yields the following condition:
\[
\frac{
  p_{\text{data}}\!\left(y_t^{w}\prec y_t^{l}\mid x,\,y_{<t}^{w},\,y_{<t}^{l}\right)
}{
  p_{\text{data}}\!\left(y_t^{w}\succ y_t^{l}\mid x,\,y_{<t}^{w},\,y_{<t}^{l}\right)
}
=
\left[
\frac{
  \pi_{\hat{\theta}}\!\left(y_t^{l}\mid [x,\,y_{<t}^{l}]\right)\,
  \pi_{\mathrm{ref}}\!\left(y_t^{w}\mid [x,\,y_{<t}^{w}]\right)
}{
  \pi_{\mathrm{ref}}\!\left(y_t^{l}\mid [x,\,y_{<t}^{l}]\right)\,
  \pi_{\hat{\theta}}\!\left(y_t^{w}\mid [x,\,y_{<t}^{w}]\right)
}
\, w_t
\right]^{\beta}.
\]

Looping over all time steps from $t=1$ to $T$ and combining with the recursive formula in Eq.~\ref{eq:recursive}, we obtain
\begin{equation*}\label{eq:seq-pref-data}
\frac{
p_{\text{data}}\!\left(y_{\leq T}^{w}\prec y_{\leq T}^{l}\mid x\right)
}{
p_{\text{data}}\!\left(y_{\leq T}^{w}\succ y_{\leq T}^{l}\mid x\right)
}
=
\left[
\frac{
  \pi_{\hat{\theta}}\!\left(y_{\leq T}^{l}\mid x\right)\,
  \pi_{\mathrm{ref}}\!\left(y_{\leq T}^{w}\mid x\right)
}{
  \pi_{\mathrm{ref}}\!\left(y_{\leq T}^{l}\mid x\right)\,
  \pi_{\hat{\theta}}\!\left(y_{\leq T}^{w}\mid x\right)
}
\, w_x
\right]^{\beta},
\end{equation*}
where $w_x \coloneqq \prod_{t=1}^{T} w_t$.
\begin{equation*}\label{eq:seq-ratio-hattheta}
\frac{
   \pi_{\hat{\theta}}\!\left(y_{\leq T}^{w}\mid x\right)
}{
    \pi_{\hat{\theta}}\!\left(y_{\leq T}^{l}\mid x\right)
}
=
\frac{
    \pi_{\mathrm{ref}}\!\left(y_{\leq T}^{w}\mid x\right)
}{
    \pi_{\mathrm{ref}}\!\left(y_{\leq T}^{l}\mid x\right)
}
\left(
\frac{
p_{\text{data}}\!\left(y_{\leq T}^{w}\succ y_{\leq T}^{l}\mid x\right)
}{
p_{\text{data}}\!\left(y_{\leq T}^{w}\prec y_{\leq T}^{l}\mid x\right)
}
\right)^{\frac{1}{\beta}}
w_x .
\end{equation*}

Similarly, we derive the sequence-level ratio for the optimal policy $\pi_{\theta^*}$ by aggregating the token-level conditions from Propositions~\ref{prop:q_token_ratio} and~\ref{pro:a_token_ratio} over $t=1 \dots T$ and applying the recursive preference formula (Eq.~\ref{eq:recursive}):
\begin{equation*}
\frac{
    \pi_{\theta}^{*}\!\left(y_{\leq T}^{w}\mid x\right)
}{
    \pi_{\theta}^{*}\!\left(y_{\leq T}^{l}\mid x\right)
}
=
\frac{
    \pi_{\mathrm{ref}}\!\left(y_{\leq T}^{w}\mid x\right)
}{
    \pi_{\mathrm{ref}}\!\left(y_{\leq T}^{l}\mid x\right)
}
\left(
\frac{
p_{\text{data}}\!\left(y_{\leq T}^{w}\succ y_{\leq T}^{l}\mid x\right)
}{
p_{\text{data}}\!\left(y_{\leq T}^{w}\prec y_{\leq T}^{l}\mid x\right)
}
\right)^{\frac{1}{\beta}}
w_x .
\end{equation*}
Comparing this with the expression for $\pi_{\hat{\theta}}$ derived above reveals that both policies satisfy the exact same likelihood ratio for all points $(x,y^w_{\leq T},y^l_{\leq T})$:
\[
\frac{
   \pi_{\hat{\theta}}\!\left(y_{\leq T}^{w}\mid x\right)
}{
    \pi_{\hat{\theta}}\!\left(y_{\leq T}^{l}\mid x\right)
}
=
\frac{
   \pi_{\theta}^{*}\!\left(y_{\leq T}^{w}\mid x\right)
}{
    \pi_{\theta}^{*}\!\left(y_{\leq T}^{l}\mid x\right)
}
\]
Using the completeness property of concrete score \cite{meng2023concretescorematchinggeneralized}, we conclude that
$\pi_{\hat{\theta}}=\pi_{\theta^\ast}$.
\end{proof}

\subsection{Proof for Theorem \ref{theorem:loss}}
\begin{theorem*}
    We have $\mathcal{L}\left(\theta,p_{\text{data}}\right)=D_{h}\left(R_{\text{data}},R_{\theta}\right)+C$ with $C$ is a constant with respect to $\theta$ where we define
\[
\mathcal{L}\!\left(\theta,p_{\text{data}}\right)
=\,
\mathbb{E}_{p_{\text{data}}\left(y^{w}\succ y^{l}\mid x\right)}\Bigg[
\frac{1}{T}\sum_{t=1}^{T}\Big(
h'\!\left(R_{\theta}^{t}\right)R_{\theta}^{t}
-h\!\left(R_{\theta}^{t}\right)
-h'\!\left(\left(R_{\theta}^{t}\right)^{-1}\right)
\Big)
\Bigg].
\]
\end{theorem*}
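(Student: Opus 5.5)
The plan is to follow the BPO-style derivation of \citet{kim2025preferenceoptimizationestimatingratio}, one timestep at a time. First expand each per-step Bregman term $D_h^t$ into three pieces:
\[
\underbrace{\mathbb{E}\big[h(R^t_{\text{data}})\big]}_{\text{(i)}}
+\underbrace{\mathbb{E}\big[h'(R^t_\theta)R^t_\theta-h(R^t_\theta)\big]}_{\text{(ii)}}
-\underbrace{\mathbb{E}\big[h'(R^t_\theta)\,R^t_{\text{data}}\big]}_{\text{(iii)}} .
\]
Term (i) does not involve $\theta$, so it is absorbed into the constant $C$. Term (ii) already has the form appearing in $\mathcal{L}$. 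So the whole proof reduces to rewriting the cross term (iii) as $\mathbb{E}[h'((R^t_\theta)^{-1})]$ under the preference expectation. Summing over $t$ and dividing by $T$ then gives $\mathcal{L}=D_h+C$ by linearity.

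To handle (iii), make the sampling model explicit. Pairs $(x,y^1,y^2)$ are drawn from a base distribution that is symmetric in $(y^1,y^2)$, and the label is then assigned with the token-level BT probabilities, so that $p_{\text{data}}(y^w_{\le t}\succ y^l_{\le t}\mid x)$ acts as a weight on the ordered pair. Multiplying that weight by
\[
R^t_{\text{data}}=\frac{p_{\text{data}}(y^w_t\prec y^l_t\mid\cdot)}{p_{\text{data}}(y^w_t\succ y^l_t\mid\cdot)}
\]
replaces the $t$-th factor of the product in Eq.~\eqref{eq:uptot_pref} by the reversed preference probability. Here the recursion in Eq.~\eqref{eq:recursive} is used to isolate that factor. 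The prefix factors for $i<t$ are treated the same way as the final factor, so the combined weight becomes the weight of the relabeled ordered pair $(y^l,y^w)$.

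Next, rename the dummy variables $y^w\leftrightarrow y^l$. The base distribution is symmetric and $T=\min(|y^w|,|y^l|)$ is symmetric, so the expectation is again taken under $p_{\text{data}}(y^w\succ y^l\mid x)$. The integrand becomes $h'$ evaluated at the swapped model ratio. It remains to check that swapping inverts $R^t_\theta$. The policy and reference factors in Eq.~\eqref{eq:token_matching} clearly invert. The per-state weight must also satisfy $w_t\mapsto w_t^{-1}$. For TBPO-Q this holds because $w_t=Z([x,y^l_{<t}])/Z([x,y^w_{<t}])$. For TBPO-A it holds because $w_t$ is the exponential of an antisymmetric difference of KL terms. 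The same is true of the plug-in estimators $b_\phi(s^l)-b_\phi(s^w)$ and the K3 estimates. Hence (iii) equals $\mathbb{E}[h'((R^t_\theta)^{-1})]$.

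The main obstacle is the measure bookkeeping in the step that combines the product weight with $R^t_{\text{data}}$. Only the $t$-th factor is cleanly reversed by $R^t_{\text{data}}$, while the prefix factors $i<t$ must also be reversed for the relabeling to land back on the same measure. My first attempt is to read $D_h^t$ as an expectation over the symmetric pair distribution weighted only by the step-$t$ preference factor, conditional on the prefixes. This matches the classical density-ratio setup, with $p_{\text{de}}=p(\succ)$ and $p_{\text{nu}}=p(\prec)$ on the same pair space, and with it the identity $\mathbb{E}_{p_{\text{de}}}[R\,f]=\mathbb{E}_{p_{\text{nu}}}[f]$ applies directly. I would then argue that any remaining prefix-dependent reweighting does not depend on $\theta$ and is symmetric, so the swap argument goes through.
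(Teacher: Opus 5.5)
You are on the paper's route. You expand $B_h$, absorb $\mathbb{E}[h(R^t_{\text{data}})]$ into $C$, and turn the cross term into $\mathbb{E}[h'((R^t_\theta)^{-1})]$ by a $y^w\leftrightarrow y^l$ relabeling under which $R^t_\theta\mapsto(R^t_\theta)^{-1}$. Your check that $w_t\mapsto w_t^{-1}$ for both variants is fine. However, the step you flag is a genuine gap, and your proposed fix does not close it.

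Write $p_i^{\succ},p_i^{\prec}$ for $p_{\text{data}}(y^w_i\succ y^l_i\mid x,y^w_{<i},y^l_{<i})$ and its complement, and set $W_{t-1}=\prod_{i<t}p_i^{\succ}$ and $\tilde W_{t-1}=\prod_{i<t}p_i^{\prec}$.
\begin{itemize}
\item Multiplying the step-$t$ weight $W_{t-1}p_t^{\succ}$ by $R^t_{\text{data}}$ gives $W_{t-1}p_t^{\prec}$. Nothing reverses the prefix factors, so the claim that ``the prefix factors for $i<t$ are treated the same way'' is unfounded.
\item The leftover weight is not symmetric: the swap sends $W_{t-1}$ to $\tilde W_{t-1}$. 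Its $\theta$-independence does not help, because it multiplies $h'(R^t_\theta)$.
\end{itemize}
Carrying out the swap correctly on your symmetric base distribution $\mu$, the cross term minus its claimed replacement is
\[
\mathbb{E}_\mu\bigl[W_{t-1}\,p_t^{\prec}\,h'(R^t_\theta)\bigr]-\mathbb{E}_\mu\bigl[W_{t-1}\,p_t^{\succ}\,h'\bigl((R^t_\theta)^{-1}\bigr)\bigr]
=\mathbb{E}_\mu\bigl[(W_{t-1}-\tilde W_{t-1})\,p_t^{\prec}\,h'(R^t_\theta)\bigr].
\]
This vanishes at $t=1$ but not in general for $t\ge 2$. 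For example, take $h(R)=R\log R-R$ and $t=2$. Pairing each ordered pair with its swap reduces the difference to $\tfrac12\,\mathbb{E}_\mu\bigl[(2p_1^{\succ}-1)\log R^t_\theta\bigr]$, which depends on $\theta$ and cannot be absorbed into $C$.

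Your alternative reading weights $D_h^t$ only by $p_t^{\succ}$ over a swap-symmetric measure on prefix pairs. That does make $\mathbb{E}_{p_{\text{de}}}[Rf]=\mathbb{E}_{p_{\text{nu}}}[f]$ plus relabeling valid, but it has two costs. First, it changes the definition of $D_h^t$ in the statement. Second, the resulting outer measure $\mu\,p_t^{\succ}$ depends on $t$. To land on $\mathcal{L}$ you must additionally \emph{assume} that the observed labeled pairs are distributed as $\mu\,p_t^{\succ}$ for every $t$, i.e.\ identify each of these with $p_{\text{data}}(y^w\succ y^l\mid x)$. The same unargued identification is hidden in your sentence ``the expectation is again taken under $p_{\text{data}}(y^w\succ y^l\mid x)$''. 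Under your explicit model, the step-$t$ weight $W_t$ is not the full-sequence one.

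For calibration, the paper's proof takes exactly these two shortcuts:
\begin{itemize}
\item It writes $h'(R^t_\theta)\,p_t^{\prec}=p_t^{\succ}\,h'((R^t_\theta)^{-1})$ as a pointwise identity inside $\mathbb{E}_{p_{\text{data}}(y^w_{\le t-1}\succ y^l_{\le t-1}\mid x)}$. This implicitly treats that prefix-weighted measure as swap-invariant.
\item It later replaces $\mathbb{E}_{p_{\text{data}}(y^w_{\le t}\succ y^l_{\le t}\mid x)}$ by $\mathbb{E}_{p_{\text{data}}(y^w\succ y^l\mid x)}$ without justification.
\end{itemize}
So mirroring the paper will not close the hole. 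A sound argument needs an explicit extra hypothesis of the kind just described.
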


\begin{proof}
By definition:
\begin{align}
D_{h}^{t}\!\left(R_{\text{data}}^{t},R_{\theta}^{t}\right)
&= \mathbb{E}_{p_{\text{data}}\!\left(y_{\leq t}^{w}\succ y_{\leq t}^{l}\mid x\right)}
\Bigl[
h\!\left(R_{\text{data}}^{t}\right)-h\!\left(R_{\theta}^{t}\right)
- h'\!\left(R_{\theta}^{t}\right)\!\left(R_{\text{data}}^{t}-R_{\theta}^{t}\right)
\Bigr] \nonumber\\
&= \mathbb{E}_{p_{\text{data}}\!\left(y_{\leq t}^{w}\succ y_{\leq t}^{l}\mid x\right)}
\Bigl[h'(R_\theta^t)R_\theta^t - h(R_\theta^t) - h'(R_\theta^t)R_{\text{data}}^t\Bigr]
+ \mathbb{E}_{p_{\text{data}}\!\left(y_{\leq t}^{w}\succ y_{\leq t}^{l}\mid x\right)}\bigl[h(R_{\text{data}}^t)\bigr] \nonumber \\
&= \mathbb{E}_{p_{\text{data}}\!\left(y_{\leq t}^{w}\succ y_{\leq t}^{l}\mid x\right)}
\left[
h'(R_\theta^t)R_\theta^t - h(R_\theta^t)
- h'(R_\theta^t)\frac{p_\text{data}(y_{t}^{w}\prec y_{t}^{l}\mid x,y_{<t}^{w},y_{<t}^{l})}{p_\text{data}(y_{t}^{w}\succ y_{t}^{l}\mid x,y_{<t}^{w},y_{<t}^{l})}
\right] - C \nonumber \\
&= \mathbb{E}_{p_{\text{data}}\!\left(y_{\leq t}^{w}\succ y_{\leq t}^{l}\mid x\right)}\!\left[h'(R_\theta^t)R_\theta^t - h(R_\theta^t)\right]
- \mathbb{E}_{p_{\text{data}}\!\left(y_{\leq t-1}^{w}\succ y_{\leq t-1}^{l}\mid x\right)}\!\left[h'(R_\theta^t)p_\text{data}(y_{t}^{w}\prec y_{t}^{l}\mid x,y_{<t}^{w},y_{<t}^{l})\right] - C .
\label{eq:Dh-prefix}
\end{align}

Next, observe that:
\[
\begin{aligned}
h'(R_\theta)\,p_\text{data}(y_{t}^{w}\prec y_{t}^{l}\mid x,y_{<t}^{w},y_{<t}^{l})
&= p_\text{data}(y_{t}^{w}\prec y_{t}^{l}\mid x,y_{<t}^{w},y_{<t}^{l})\;
   h'\!\bigl(R_\theta(x, y^w_t, y^l_t)\bigr) \\
&= p_\text{data}(y_{t}^{w}\succ y_{t}^{l}\mid x,y_{<t}^{w},y_{<t}^{l})\;
   h'\!\bigl(R_\theta(x, y^l_t, y^w_t)\bigr)\, \\
&= p_\text{data}(y_{t}^{w}\succ y_{t}^{l}\mid x,y_{<t}^{w},y_{<t}^{l})\;
h'\left(\frac{\pi_{\theta}\left(y_{t}^{w}\mid[x,y_{<t}^{w}]\right)\pi_{\mathrm{ref}}\!\left(y_{t}^{l}\mid[x,y_{<t}^{l}]\right)}{\pi_{\mathrm{ref}}\!\left(y_{t}^{w}\mid[x,y_{<t}^{w}]\right)\pi_{\theta}\!\left(y_{t}^{l}\mid[x,y_{<t}^{l}]\right)}w_{t}^{-1}\right)^{\beta} \\
&= p_\text{data}(y_{t}^{w}\succ y_{t}^{l}\mid x,y_{<t}^{w},y_{<t}^{l})\;
h'\left(\frac{1}{R_\theta^t(x, y^w_t, y^l_t)}\right) \\
&= p_\text{data}(y_{t}^{w}\succ y_{t}^{l}\mid x,y_{<t}^{w},y_{<t}^{l})\;
h'((R_\theta^t)^{-1})
\end{aligned}
\]

Therefore:
\[
\begin{aligned}
\mathbb{E}_{p_{\text{data}}\!\left(y_{\leq t-1}^{w}\succ y_{\leq t-1}^{l}\mid x\right)}
\Bigl[
h'(R_\theta^t)\,p_\text{data}(y_{t}^{w}\prec y_{t}^{l}\mid x,y_{<t}^{w},y_{<t}^{l})
\Bigr]
&=
\mathbb{E}_{p_{\text{data}}\!\left(y_{\leq t-1}^{w}\succ y_{\leq t-1}^{l}\mid x\right)}
\Bigl[
p_\text{data}(y_{t}^{w}\succ y_{t}^{l}\mid x,y_{<t}^{w},y_{<t}^{l})\;
h'\!\bigl((R_\theta^t)^{-1}\bigr)
\Bigr] \\
&=
\mathbb{E}_{p_{\text{data}}\!\left(y_{\leq t}^{w}\succ y_{\leq t}^{l}\mid x\right)}
\Bigl[
h'\!\bigl((R_\theta^t)^{-1}\bigr)
\Bigr].
\end{aligned}
\]

Substituting the above result into Eq.~\eqref{eq:Dh-prefix} yields:
\[
\begin{aligned}
D_{h}^{t}\!\left(R_{\text{data}}^{t},R_{\theta}^{t}\right)
&= \mathbb{E}_{p_{\text{data}}\!\left(y_{\leq t}^{w}\succ y_{\leq t}^{l}\mid x\right)}\!\left[h'(R_\theta^t)R_\theta^t - h(R_\theta^t)-h'\!\bigl((R_\theta^t)^{-1}\bigr)\right] - C \\
&= \mathbb{E}_{p_{\text{data}}\!\left(y^{w}\succ y^{l}\mid x\right)}\!\left[h'(R_\theta^t)R_\theta^t - h(R_\theta^t)-h'\!\bigl((R_\theta^t)^{-1}\bigr)\right] - C
\end{aligned}
\]

Averaging over timesteps, we obtain:
\begin{align*}
D_{h}\left(R_{\text{data}},R_{\theta}\right)
&= \frac{1}{T}\sum_{t=1}^{T}D_{h}^{t}\left(R_{\text{data}}^{t},R_{\theta}^{t}\right) \\
&= \frac{1}{T}\sum_{t=1}^{T} \mathbb{E}_{p_{\text{data}}\!\left(y^{w}\succ y^{l}\mid x\right)}\!\left[h'(R_\theta^t)R_\theta^t - h(R_\theta^t)-h'\!\bigl((R_\theta^t)^{-1}\bigr)\right] - C \\
&= \mathbb{E}_{p_{\text{data}}\!\left(y^{w}\succ y^{l}\mid x\right)}\!\left[\frac{1}{T}\sum_{t=1}^{T}h'(R_\theta^t)R_\theta^t - h(R_\theta^t)-h'\!\bigl((R_\theta^t)^{-1}\bigr)\right] - C \\
&= \mathcal{L}\!\left(\theta,p_{\text{data}}\right) -C
\end{align*}

\end{proof}

\section{Additional Experimental Result}
\label{app:additional}

\subsection{Out-of-domain Result on Summarization Task}
\begin{figure*}[t] 
  \centering
  \includegraphics[width=\textwidth,page=1]{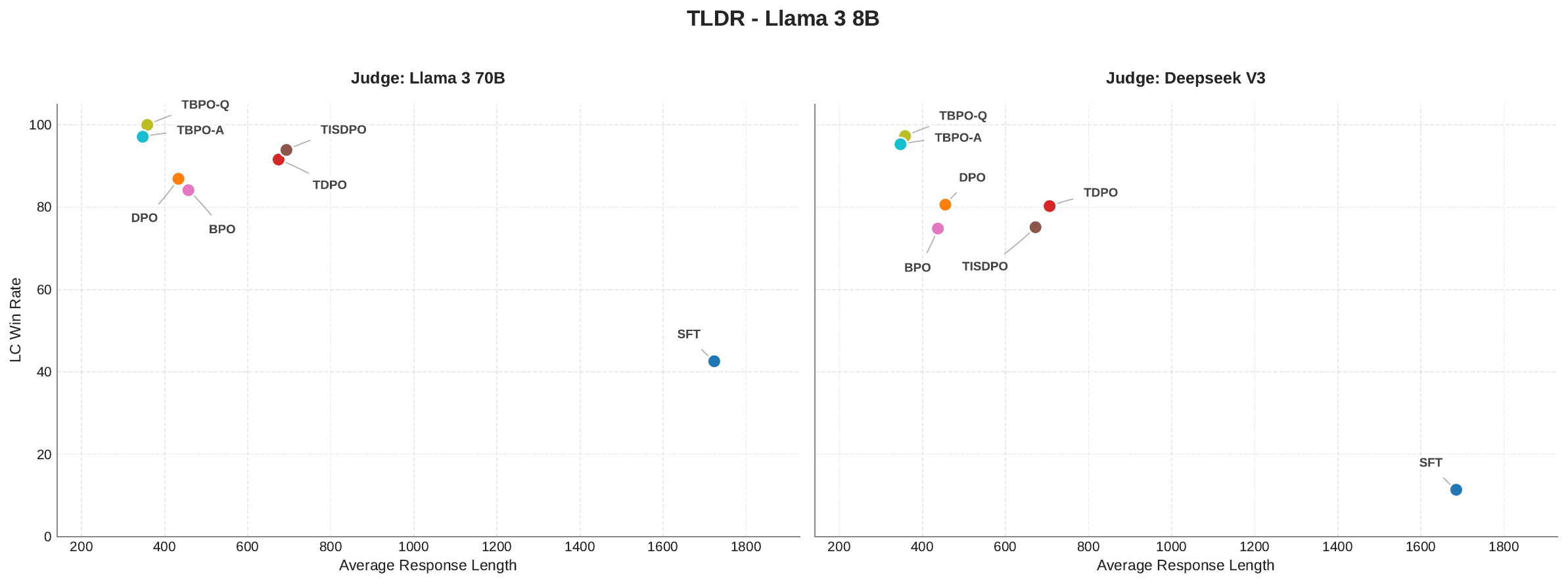}
  \caption{LC win rate vs.\ average response length on TLDR for Llama~3~8B, judged by Llama~3~70B and DeepSeek-V3. TBPO achieves highest win rate with shorter outputs, suggesting strong OOD generalization despite training only on UltraFeedback dataset.}
  \label{fig:tldr}
\end{figure*}

To stress-test out-of-domain (OOD) transfer, we evaluate on 200 held-out samples from the test split of \href{https://huggingface.co/datasets/trl-lib/tldr}{TLDR}. This is a stringent OOD setting: TLDR is a summarization benchmark with long input documents and short, reference-style outputs, and we do not use any TLDR data during training (TBPO is trained only on UltraFeedback).

As with Anthropic HH, we quantify OOD generalization by measuring length-controlled (LC) win rate to reduce verbosity bias, against the dataset-preferred completion and plotting it against average response length (Figure~\ref{fig:tldr}). Across both judges (Llama~3~70B and DeepSeek-V3), TBPO (A/Q variants) lies on the Pareto frontier, achieving the strongest win rates while producing substantially shorter outputs than prior preference-optimization baselines. Because TLDR differs in domain, task format, and implicit preference structure, these gains are difficult to attribute to UltraFeedback-specific stylistic overfitting; instead, they suggest TBPO learns preferences that transfer under distribution shift.

The plot also highlights a clear mismatch between length and preference in this OOD task: SFT produces the longest responses yet attains low win rates under both judges, while DPO/BPO/TISDPO/TDPO generally trade longer generations for incremental gains and still fall short of TBPO. Since TLDR implicitly rewards concise, faithful summarization, TBPO’s ability to win while staying short strengthens the interpretation that it adapts to task-appropriate brevity rather than relying on generic verbosity. The consistent ordering across two independent judges further supports that the observed advantage reflects genuine OOD generalization.

\subsection{Training with Noisy Data}

We additionally evaluate TBPO under noisy preference labels by randomly swapping the chosen and rejected responses in 10\% of the training pairs. Table~\ref{tab:noisy_training_llama3} reports results on the Llama~3~8B backbone.

\begin{table*}[t]
\centering
\small
\setlength{\tabcolsep}{7pt}
\renewcommand{\arraystretch}{1.15}
\resizebox{\textwidth}{!}{%
\begin{tabular}{lccccccc}
\toprule
\textbf{Method} & \textbf{HellaSwag} & \textbf{ARC} & \textbf{MMLU} & \textbf{TruthfulQA} & \textbf{Winogrande} & \textbf{GSM8K} & \textbf{Average} \\
\midrule
SFT
& 79.46 & 51.45 & 61.19 & 46.46 & 75.69 & 68.67 & 63.82 \\
DPO
& 81.23 & 56.05 & 61.70 & 48.47 & 75.84 & 74.20 & 66.24 \\
TDPO
& \best{83.29} & 59.04 & 61.86 & 51.76 & 76.55 & 75.10 & 67.91 \\
TIS-DPO
& 81.37 & 58.70 & 60.77 & 49.87 & 75.14 & 75.12 & 66.82 \\
BPO
& 81.68 & 54.95 & 61.48 & 47.88 & 75.13 & 73.22 & 65.72 \\
\addlinespace[3pt]
TBPO-Q noisy
& 81.11 & 60.24 & 60.25 & 50.38 & 78.24 & 74.60 & 67.47 \\
TBPO-Q (Ours)
& 81.90 & \best{64.07} & \best{64.92} & \best{53.36} & 79.08 & 78.92 & 70.38 \\
TBPO-A noisy
& 80.92 & 61.34 & 60.38 & 51.08 & 77.74 & 74.35 & 67.63 \\
TBPO-A (Ours)
& 81.97 & \best{64.08} & \second{64.84} & \second{53.29} & \best{79.63} & \best{79.22} & \best{70.50} \\
\bottomrule
\end{tabular}%
}
\caption{General capability results on the Llama~3~8B backbone when TBPO is trained with 10\% noisy preference data, created by swapping the chosen and rejected responses in randomly selected training pairs.}
\label{tab:noisy_training_llama3}
\end{table*}

Performance drops under noisy labels, but TBPO remains competitive with strong token-level baselines and close to the clean TBPO variants. This suggests that TBPO is not brittle to moderate label noise. Broader analyses, such as extreme response lengths and additional distribution shifts, would further clarify the method's practical limits.

\subsection{Constant Weight Training}

We further conduct an ablation by fixing the state-dependent correction to $w_t=1$ throughout training. Detailed results on the Llama~3~8B backbone are shown in Table~\ref{tab:constant_weight_llama3}.

\begin{table*}[t]
\centering
\small
\setlength{\tabcolsep}{7pt}
\renewcommand{\arraystretch}{1.15}
\resizebox{\textwidth}{!}{%
\begin{tabular}{lccccccc}
\toprule
\textbf{Method} & \textbf{HellaSwag} & \textbf{ARC} & \textbf{MMLU} & \textbf{TruthfulQA} & \textbf{Winogrande} & \textbf{GSM8K} & \textbf{Average} \\
\midrule
SFT
& 79.46 & 51.45 & 61.19 & 46.46 & 75.69 & 68.67 & 63.82 \\
DPO
& 81.23 & 56.05 & 61.70 & 48.47 & 75.84 & 74.20 & 66.24 \\
TDPO
& \best{83.29} & 59.04 & 61.86 & 51.76 & 76.55 & 75.10 & 67.91 \\
TIS-DPO
& 81.37 & 58.70 & 60.77 & 49.87 & 75.14 & 75.12 & 66.82 \\
BPO
& 81.68 & 54.95 & 61.48 & 47.88 & 75.13 & 73.22 & 65.72 \\
\addlinespace[3pt]
TBPO (fixed $w_t=1$)
& 80.93 & 61.35 & 62.11 & 52.38 & 77.24 & 76.23 & 68.37 \\
\addlinespace[3pt]
TBPO-Q (Ours)
& 81.90 & \second{64.07} & \best{64.92} & \best{53.36} & \second{79.08} & \second{78.92} & \second{70.38} \\
TBPO-A (Ours)
& \second{81.97} & \best{64.08} & \second{64.84} & \second{53.29} & \best{79.63} & \best{79.22} & \best{70.50} \\
\bottomrule
\end{tabular}%
}
\caption{General capability results on the Llama~3~8B backbone with the state-dependent correction fixed to $w_t=1$.}
\label{tab:constant_weight_llama3}
\end{table*}

Relative to both TBPO-Q and TBPO-A, the fixed-$w_t$ variant shows a clear drop, confirming that the per-state correction contributes materially to the final gain. At the same time, it still outperforms all non-TBPO baselines, including TDPO, indicating that the improvement comes from both the token-level ratio-matching formulation and the state-dependent correction.

\subsection{Judge Agreement Analysis}
\label{app:judge-agreement}

To assess judge robustness, we measured agreement across both LLM-as-a-judge setups used in the paper: DeepSeek-V3 vs.\ Llama~3~70B on HH-RLHF, and Llama~3~70B vs.\ Mistral-Large on MT-Bench.

Full results are available in Tables~\ref{tab:judge_agreement_hh} and~\ref{tab:judge_agreement_mt}.

On HH-RLHF, the two judges show strong agreement (90\% raw agreement; PABAK = 0.8 over 1{,}180 paired judgments). The lower Cohen's $\kappa$ is expected in this highly skewed setting, and aggregate conclusions remain consistent across judges (Pearson $r = 0.81$, Spearman $\rho = 0.71$). On MT-Bench, example-level agreement is moderate (70\% raw agreement, $\kappa = 0.49$, PABAK = 0.41), but aggregate conclusions are nearly identical across judges (Pearson $r = 0.99$, Spearman $\rho = 1.0$). Overall, the main conclusions are stable across judges, suggesting that TBPO's gains are not driven by any single judge choice.

\begin{table*}[t]
\centering
\small
\setlength{\tabcolsep}{8pt}
\renewcommand{\arraystretch}{1.15}
\begin{tabular}{lccccc}
\toprule
\textbf{Method} & \textbf{N} & \textbf{Agree} & \textbf{Agree\%} & \textbf{Kappa} & \textbf{PABAK} \\
\midrule
TBPO-A & 197 & 191 & 97.0 & 0.4861 & 0.9391 \\
TBPO-Q & 198 & 187 & 94.4 & 0.3240 & 0.8889 \\
DPO & 196 & 165 & 84.2 & 0.2780 & 0.6837 \\
SFT & 199 & 174 & 87.4 & 0.3516 & 0.7487 \\
TDPO & 195 & 173 & 88.7 & 0.3270 & 0.7744 \\
TISDPO & 195 & 172 & 88.2 & 0.3158 & 0.7641 \\
\midrule
\textbf{Overall} & \textbf{1180} & \textbf{1062} & \textbf{90.0} & \textbf{0.3471} & \textbf{0.8000} \\
\bottomrule
\end{tabular}

\vspace{0.8em}

\begin{tabular}{lcc}
\toprule
\textbf{Method} & \textbf{DeepSeek-V3 WR\%} & \textbf{Llama~3~70B WR\%} \\
\midrule
TBPO-A & 96.00 & 97.97 \\
TBPO-Q & 95.50 & 95.45 \\
DPO & 87.50 & 86.22 \\
SFT & 86.50 & 91.46 \\
TDPO & 90.50 & 90.26 \\
TISDPO & 88.00 & 92.31 \\
\bottomrule
\end{tabular}
\caption{HH-RLHF judge agreement results comparing each model response against the reference response. Pearson $r = 0.8056$ ($p = 5.30\mathrm{e}{-02}$), Spearman $\rho = 0.7143$ ($p = 1.11\mathrm{e}{-01}$).}
\label{tab:judge_agreement_hh}
\end{table*}

\begin{table*}[t]
\centering
\small
\setlength{\tabcolsep}{8pt}
\renewcommand{\arraystretch}{1.15}
\begin{tabular}{lccccc}
\toprule
\textbf{TBPO-Q vs.} & \textbf{N} & \textbf{Agree} & \textbf{Agree\%} & \textbf{Kappa} & \textbf{PABAK} \\
\midrule
DPO & 160 & 117 & 73.1 & 0.5058 & 0.4625 \\
TDPO & 160 & 107 & 66.9 & 0.4542 & 0.3375 \\
TISDPO & 160 & 115 & 71.9 & 0.5143 & 0.4375 \\
\midrule
\textbf{Overall} & \textbf{480} & \textbf{339} & \textbf{70.6} & \textbf{0.4914} & \textbf{0.4125} \\
\bottomrule
\end{tabular}

\vspace{0.8em}

\begin{tabular}{lcc}
\toprule
\textbf{TBPO-Q vs.} & \textbf{Llama~3~70B WR\%} & \textbf{Mistral-Large WR\%} \\
\midrule
DPO & 73.44 & 76.88 \\
TDPO & 66.25 & 67.81 \\
TISDPO & 70.00 & 71.88 \\
\bottomrule
\end{tabular}
\caption{MT-Bench judge agreement results comparing the TBPO-Q response with the baseline response. Pearson $r = 0.9964$ ($p = 5.40\mathrm{e}{-02}$), Spearman $\rho = 1.0000$ ($p = 0.00\mathrm{e}{+00}$).}
\label{tab:judge_agreement_mt}
\end{table*}

\section{Reasoning Questions Case Study}
\label{app:reasoning_case_study}

We believe the larger gains on reasoning-centric benchmarks are expected because these tasks are especially sensitive to early prefix-conditioned decisions. Once a model commits to an incorrect intermediate step, the rest of the reasoning often follows the wrong trajectory. TBPO is well matched to this setting because it performs token-level ratio matching rather than optimizing only at the sequence level. As a result, it provides more direct learning signal on the local reasoning decisions that determine whether a solution stays on track. In practice, this tends to help the model commit to the correct setup earlier, which is particularly important in multi-step reasoning.

We provide two qualitative examples in Tables~\ref{tab:reasoning_case_desktop} and~\ref{tab:reasoning_case_limbs}. Example~1 shows our model identifying the decisive setup early, while the baseline misses the key relation and propagates the error through the rest of the solution. Example~2 shows a different but related advantage: even when both models reach the correct final answer, our model does so through a shorter and more direct reasoning path, whereas the baseline relies on a longer enumerative chain.

Together, these examples illustrate why the gains are larger on reasoning-centric benchmarks: such tasks are especially sensitive to early intermediate decisions, and TBPO is designed to train those decisions more directly.

\begin{table*}[t]
\centering
\small
\setlength{\tabcolsep}{6pt}
\renewcommand{\arraystretch}{1.2}
\begin{tabular}{p{0.13\textwidth}p{0.80\textwidth}}
\toprule
\textbf{Item} & \textbf{Content} \\
\midrule
\textbf{Question} &
At a certain grade level, three-fourths of students have a desktop computer at home. If 20 students do not have desktop computers, how many students are there at that grade level? \\
\midrule
\textbf{Our model} &
Let $x$ be the total number of students at that grade level. Three-fourths of the students have desktop computers, so $(3/4)x$ students have desktop computers. The number of students without desktop computers is 20, so $(1/4)x = 20$. To find the total number of students, we can multiply both sides of the equation by 4: $x = 20 \times 4 = 80$.

\textbf{80} \\
\midrule
\textbf{Baseline} &
If 20 students do not have desktop computers, then $75 - 20 = 55$ students have desktop computers at home. Since $3/4$ of the students have desktop computers, then $3/4 \times 55 = 41.25$ students have desktop computers at home. Since 55 students have desktop computers, then $55 - 41.25 = 13.75$ students do not have desktop computers. Since 13.75 students do not have desktop computers, then $13.75 \times 4/3 = 18.33$ students do not have desktop computers. There are 18.33 students who do not have desktop computers, so there are $18.33 \times 4/3 = 24.44$ students at that grade level.

\textbf{24} \\
\bottomrule
\end{tabular}
\caption{A reasoning case where our model identifies the decisive setup immediately, while the baseline commits to an incorrect setup and propagates the error through the rest of the solution.}
\label{tab:reasoning_case_desktop}
\end{table*}

\begin{table*}[t]
\centering
\small
\setlength{\tabcolsep}{6pt}
\renewcommand{\arraystretch}{1.2}
\begin{tabular}{p{0.13\textwidth}p{0.80\textwidth}}
\toprule
\textbf{Item} & \textbf{Content} \\
\midrule
\textbf{Question} &
Helena is a mad scientist testing her latest creation, the Lots-of-Limbs Serum. The serum will make the drinker grow an extra arm every three days and an extra leg every five days. After fifteen days, how many new limbs will Helena's serum cause a person to grow if they drink it? \\
\midrule
\textbf{Our model} &
At one new arm every 3 days, after 15 days, a person who drank Helena's serum would grow $15/3 = 5$ extra arms. At one new leg every 5 days, after 15 days, the person would grow $15/5 = 3$ extra legs. In all, someone who drank Helena's serum would grow $5 + 3 = 8$ new limbs after 15 days.

\textbf{8} \\
\midrule
\textbf{Baseline} &
After three days, the drinker will grow one extra arm. After six days, the drinker will grow two extra arms. After nine days, the drinker will grow three extra arms. After twelve days, the drinker will grow four extra arms. After fifteen days, the drinker will grow five extra arms. After five days, the drinker will grow one extra leg. After ten days, the drinker will grow two extra legs. After fifteen days, the drinker will grow three extra legs. After fifteen days, the drinker will grow five extra arms and three extra legs, for a total of 8 new limbs.

\textbf{8} \\
\bottomrule
\end{tabular}
\caption{A reasoning case where both models reach the correct final answer, but our model does so through a shorter and more direct reasoning path.}
\label{tab:reasoning_case_limbs}
\end{table*}

\section{Detailed Experimental Setup}
\label{app:experiments}

\subsection{Benchmark and Metrics Details}

\paragraph{Open LLM Leaderboard \cite{huggingface_open_llm_leaderboard_v1}.}
The leaderboard evaluates open-weight LLMs on six core benchmarks using the EleutherAI Language Model Evaluation Harness under a fixed few-shot setup: ARC-Challenge (25-shot), HellaSwag (10-shot), MMLU (5-shot), TruthfulQA (0-shot; implemented as \texttt{truthfulqa-mc}), Winogrande (5-shot), and GSM8K (5-shot). Higher scores indicate better performance across these tasks.

\paragraph{MT-Bench \cite{zheng2023judging}.}
MT-Bench-101 is a fine-grained multi-turn dialogue benchmark organized by a three-tier ability taxonomy. It contains 1{,}388 multi-turn dialogues comprising 4{,}208 turns across 13 task types. For evaluation, the benchmark uses the \emph{golden context} (the full dialogue history) as input context for each turn, and employs an LLM as an automatic judge to score each turn according to task-specific scoring guidelines. To reflect robustness across the whole interaction, the final dialogue score is taken as the lowest (minimum) round score among the turns.

\paragraph{\href{https://huggingface.co/datasets/Anthropic/hh-rlhf}{Anthropic HH-RLHF}.}
HH-RLHF provides human preference data targeting \emph{helpfulness} and \emph{harmlessness}. The preference-modeling portion is distributed as JSONL records containing paired model responses: a \texttt{chosen} response preferred by annotators and a \texttt{rejected} response. The dataset is intended for training preference/reward models for RLHF, and the dataset card explicitly warns against using these data for supervised fine-tuning of dialogue agents. The repository also includes red-teaming dialogues and notes that the data may contain potentially offensive or upsetting content.

\paragraph{\href{https://huggingface.co/datasets/trl-lib/tldr}{TLDR}.}
The TL;DR dataset is a processed Reddit summarization corpus built from posts where authors append a ``TL;DR'' summary. It is provided in a prompt--completion format for summarization training, with a long-form post as input and its short TL;DR as the target summary. The dataset card specifies two columns: \texttt{"pompt"} (the full Reddit post text) and \texttt{"completion"} (the TL;DR summary).

\paragraph{Length-controlled Win Rate.}
In a pairwise preference evaluation, each example $i$ consists of an input (e.g., prompt) $x_i$ and two candidate outputs
$z_i^{(A)}$ and $z_i^{(B)}$, with a judge label $y_i \in \{0,1\}$ indicating whether $A$ is preferred ($y_i=1$) over $B$.
The (raw) win rate is
\begin{equation}
\mathrm{WR}(A,B) \;=\; 100 \cdot \mathbb{E}_i \left[y_i\right].
\end{equation}
LLM-based judges (and sometimes humans) can exhibit \emph{length bias}, i.e., a systematic preference for longer outputs, which can inflate win rates for more verbose systems. 
To control for this confounder, we model pairwise preferences with a logistic regression that includes the output-length
difference as a covariate:
\begin{equation}
\Pr(y_i=1 \mid x_i)
\;=\;
\sigma\!\Big(\Delta\theta \;+\; \lambda\, g(\Delta \ell_i) \;+\; u_{x_i}\Big),
\qquad
\Delta \ell_i := \ell\!\left(z_i^{(A)}\right) - \ell\!\left(z_i^{(B)}\right),
\end{equation}
where $\sigma(t)=\frac{1}{1+e^{-t}}$, $\ell(\cdot)$ is a length measure (e.g., tokens), $\Delta\theta$ captures the
system advantage of $A$ over $B$, and $u_{x_i}$ optionally absorbs item-specific difficulty via fixed effects (or random effects).
We use a bounded transform such as
$g(\Delta \ell_i)=\tanh\!\big(\Delta \ell_i / s\big)$ (with scale $s$, e.g., the empirical standard deviation) to reduce sensitivity
to extreme length gaps.
The \emph{length-controlled win rate} is the counterfactual preference probability when both outputs have equal length,
i.e., conditioning on $\Delta \ell_i = 0$:
\begin{equation}
\mathrm{LCWR}(A,B)
\;=\;
100 \cdot \mathbb{E}_i \left[\Pr(y_i=1 \mid x_i,\Delta \ell_i = 0)\right]
\;=\;
100 \cdot \mathbb{E}_i \left[\sigma\!\big(\Delta\theta + u_{x_i}\big)\right].
\end{equation}
Intuitively, $\mathrm{LCWR}$ reports the implied win probability after removing any advantage attributable purely to verbosity.

\paragraph{Diversity metrics.}

We evaluate the diversity of model responses using three metrics:

\begin{itemize}
    \item \textbf{Predictive Entropy}: Evaluates the diversity of model responses by directly reflecting the probabilities assigned to each generated token. For each prompt, we sample five responses and compute the predictive entropy. Higher entropy indicates more diverse outputs.
    
    \item \textbf{Self-BLEU}: Measures the diversity of generated sentences. For each prompt, we generate five samples and compute the BLEU score~\cite{self-bleu} for each pair of responses, then take the average. Lower Self-BLEU indicates higher diversity.
    
    \item \textbf{Distinct-1}: Assesses the lexical diversity by computing the ratio of unique unigrams to the total number of unigrams~\cite{distinct-1}. Higher scores indicate greater lexical diversity.
\end{itemize}

\paragraph{}

\subsection{Baseline details}
\label{app:baselines}
\paragraph{SFT.}
Supervised fine-tuning (SFT) adapts a pretrained LM using demonstration data (prompt/instruction $\rightarrow$ reference answer pairs) with standard maximum-likelihood training (token-level cross-entropy / next-token prediction). Its purpose is to directly teach instruction following from supervised targets, without preference comparisons or RL. In RLHF-style pipelines, SFT is commonly used as an initialization step before preference optimization. In our experiments, we perform SFT on the \emph{chosen} responses from the preference dataset.
\paragraph{DPO \cite{rafailov2023dpo}.}
Direct Preference Optimization (DPO) trains a policy from pairwise preferences (winner vs.\ loser responses for the same prompt) using a simple logistic objective. It can be derived as a closed-form solution to a KL-regularized RLHF objective, avoiding both an explicit reward model and on-policy RL. A fixed reference policy provides regularization while the learned policy increases the relative likelihood of preferred responses.
\paragraph{TDPO \cite{zeng2024tokenleveldirectpreferenceoptimization}.}
Token-level DPO (TDPO) extends DPO from sequence-level preferences to token-level optimization. By introducing token-wise constraints (e.g., per-token forward-KL terms), TDPO better matches autoregressive generation and aims to improve alignment while offering finer control over diversity.
\paragraph{TIS-DPO \cite{tis-dpo}.}
TIS-DPO addresses the limitation of treating an entire response as a single unit: different tokens can contribute unequally to preference. It proposes a token-level importance-sampling objective with per-token weights reflecting reward contribution, yielding an unbiased objective even when the dataset is not ideally balanced. Since token rewards/weights are not observed, TIS-DPO estimates importance weights from probability differences produced by contrastive LLMs (e.g., via contrastive prompting, separate win/lose models, or forward/reverse DPO).
\paragraph{BPO (BPO-SBA) \cite{kim2025preferenceoptimizationestimatingratio}.}
Bregman Preference Optimization (BPO) generalizes DPO from a likelihood-ratio viewpoint, avoiding an explicit reward model and partition function while retaining theoretical guarantees. It defines a family of tractable objectives induced by Bregman divergences, with DPO as a special case, and introduces gradient scaling (Scaled Basu’s power divergence, SBA) to improve optimization behavior for certain instances. Empirically, BPO variants can improve win rate and output entropy relative to DPO, mitigating some fidelity--diversity trade-offs.

\paragraph{Baseline Implementation.}
\textbf{SFT.} We initialize from the same SFT checkpoints as TBPO and train for 1 epoch on 2 H100 GPUs (batch size 8, gradient accumulation 4, learning rate $2\times10^{-4}$) using AdamW.
\textbf{DPO/TDPO/TIS-DPO.} We initialize from the same SFT checkpoints and train on 2 H100 GPUs (batch size 4, gradient accumulation 8, learning rate $5\times10^{-7}$) using RMSProp with $\beta=0.1$. Unless otherwise noted, we use the hyperparameters from the official implementations/papers for TDPO and TIS-DPO.
\textbf{BPO (BPO-SBA).} Since the authors do not release code, we reimplement BPO and match the hyperparameters reported in the paper.

\subsection{Implementation Details}
\label{imp_details}
\subsubsection{Per-state weight approximation}
Estimating the state-only correction weight $w_t$ adds negligible overhead relative to standard preference optimization.
\paragraph{TBPO-Q.}
We implement the baseline head $b_\phi(s)$ as a 1-hidden-layer MLP on top of the
LLM's final hidden state with MLP hidden size is 1024, trained with a separate AdamW
optimizer (lr $10^{-3}$). We detach the LLM backbone (stop-gradient) when
updating $b_\phi$ to avoid coupling the baseline fitting signal with the main
preference-optimization updates.
\paragraph{TBPO-A.}
We approximate $\pi^*$ by the current policy $\pi_\theta$ and estimate
$D_{\mathrm{KL}}(\pi_{\mathrm{ref}}(\cdot\mid s)\,\|\,\pi_\theta(\cdot\mid ))$
using the K3 Monte-Carlo estimator from \citet{Schulman2020ApproximatingKL}. For
an action $a\sim\pi_{\mathrm{ref}}(\cdot\mid s)$ and ratio
$r(a,s)=\pi_\theta(a\mid s)/\pi_{\mathrm{ref}}(a\mid s)$, we use
$\widehat{D}_{\mathrm{KL}}^{(k3)}(s;a)=(r(a,s)-1)-\log r(a,s)$ and plug the
result into the KL-baseline difference defining $w_t^{(A)}$.
\subsubsection{Hyperparameters.}
We train TBPO on 4 H100 GPUs with batch size 32 and max response length 2048, using
RMSProp with a cosine learning-rate schedule (warmup ratio 0.05), lr $5\times10^{-7}$,
for 1 epoch. Unless stated otherwise, we use $\beta=0.1$ and Bregman parameters
$\lambda=0$ and $s=4$.

\subsection{Time Complexity}
\label{app:time_complexity}
We report wall-clock runtimes under the same 4$\times$H100 setup and batch
size used in our experiments. DPO finishes in 1h52m, while TDPO, TBPO-Q, and
TBPO-A take 2h29m, 2h52m, and 2h38m, respectively. TIS-DPO takes 5h40m because
it requires training auxiliary positive/negative DPO models before the main
preference-learning stage. These results show that TBPO is more expensive than
sequence-level DPO, as expected for a token-level objective, but its overhead is
modest relative to other token-level baselines: TBPO-A is close to TDPO, TBPO-Q
is only moderately slower, and both are substantially more efficient than
TIS-DPO.

\begin{table}[h!]
\centering
\begin{tabular}{lcc}
\toprule
\textbf{Method} & \textbf{Runtime} & \textbf{Relative to DPO} \\
\midrule
DPO & 1h52m & 1.00$\times$ \\
TDPO & 2h29m & 1.33$\times$ \\
TIS-DPO & 5h40m & 3.04$\times$ \\
TBPO-Q & 2h52m & 1.54$\times$ \\
TBPO-A & 2h38m & 1.41$\times$ \\
\bottomrule
\end{tabular}
\caption{Wall-clock training runtime under the same 4$\times$H100 setup and batch size.}
\label{tab:runtime}
\end{table}


\end{document}